\newcounter{ToDo}
\newcounter{gaocomm} 
\newcounter{Note}
\definecolor{blue-violet}{rgb}{0.00,0.75,0.90}
\definecolor{mygreen}{rgb}{0.0, 0.5, 0.0}
\definecolor{awesome}{rgb}{1.0, 0.13, 0.32}
\definecolor{bostonuniversityred}{rgb}{1.0, 0.0, 0.0}
\def\bvarphi{{\boldsymbol \varphi}}
\def\bpsi{{\boldsymbol \psi}}
\def\btheta{{\boldsymbol \theta}}
\def\bphi{{\boldsymbol \phi}}
\def\bones{{\boldsymbol 1}}
\def\be{{\mathbf e}}
\def\bu{\mathbf u}
\def\bv{\mathbf v}
\def\bw{{\mathbf w}}
\def\bz{{\mathbf z}}
\def\bA{\mathbf A}
\def\bD{\mathbf D}
\def\bF{\mathbf F}
\def\bH{\mathbf H}
\def\bI{{\mathbf I}}
\def\bL{\mathbf L}
\def\bP{{\mathbf P}}
\def\bS{\mathbf S}
\def\bU{{\mathbf U}}
\def\bV{{\mathbf V}}
\def\bW{{\mathbf W}}
\def\bX{{\mathbf X}}
\def\bf{{\mathbf f}}
\def\bh{{\mathbf h}}
\def\sR{{\mathbb R}}
\def\gE{{\mathcal{E}}}
\def\gW{{\mathcal{W}}}
\def\gV{{\mathcal{V}}}
\def\gG{{\mathcal{G}}}
\def\gI{{\mathcal{I}}}
\def\vec{{\mathrm{vec}}}
\newcommand{\trace}{\mathrm{tr}}
\newtheorem{theorem}{Theorem}
\newtheorem{lemma}{Lemma}
\newtheorem{proposition}{Proposition}
\theoremstyle{definition}
\newtheorem{definition}{Definition}
\newtheorem{remark}{Remark}
\def\bLambda{{\mathbf \Lambda}}
\def\bTheta{{\mathbf \Theta}}
\def\bDelta{{\mathbf \Delta}}
\def\bOmega{{\boldsymbol \Omega}}
\title{\textbf{Generalized energy and gradient flow \\ via graph framelets}}
\author{Andi Han\footnote{University of Sydney 
   (\texttt{andi.han@sydney.edu.au}, \texttt{zsha2911@uni.sydney.edu.au}, \texttt{junbin.gao@sydney.edu.au})}
\and Dai Shi\footnote{Western Sydney University \texttt{(dai.shi@sydney.edu.au})}
\and Zhiqi Shao\footnotemark[1]
\and Junbin Gao\footnotemark[1]
}
\date{}
\begin{document}

\maketitle

\begin{abstract}
    In this work, we provide a theoretical understanding of the framelet-based graph neural networks through the perspective of energy gradient flow. By viewing the framelet-based models as discretized gradient flows of some energy, we show it can induce both low-frequency and high-frequency-dominated dynamics, via the separate weight matrices for different frequency components. This substantiates its good empirical performance on both homophilic and heterophilic graphs. We then propose a generalized energy via framelet decomposition and show its gradient flow leads to a novel graph neural network, which includes many existing models as special cases. We then explain how the proposed model generally leads to more flexible dynamics, thus potentially enhancing the representation power of graph neural networks.
\end{abstract}

\section{Introduction}
Graph neural networks (GNNs) \cite{gasteiger2018predict,hamilton2017inductive,kipf2016semi,velivckovic2018graph,wu2019simplifying} have become the primary tool for representation learning over graph-structured data, such as social networks \cite{chen2018fastgcn} citation networks \cite{kipf2016semi}, molecules \cite{duvenaud2015convolutional}, traffic networks \cite{cui2019traffic}, among others. There generally exists two types of GNN models, i.e., spatial and spectral based models. Spatial GNNs, including MPNN \cite{gilmer2017neural}, GAT \cite{velivckovic2018graph}, GIN \cite{xu2018powerful}, usually propagate information in the neighbourhood and update node representations via a weighted average of the neighbours. Spectral GNNs, including ChebyNet \cite{defferrard2016convolutional}, GCN \cite{kipf2016semi}, BernNet \cite{he2021bernnet}, performs filtering on the spectral domain provided by graph Fourier transform (where the orthonormal system is given by the eigenvectors of graph Laplacian). Wavelet-based graph representation learning \cite{dong2017sparse,hammond2011wavelets,li2020fast,wang2021deep,xu2018graph,zheng2021framelets,zheng2020mathnet,zheng2022decimated,zou2022simple}, a class of spectral methods, provide a multi-resolution analysis of graph signals and thus often lead to better signal representations by capturing information at different scales. In particular, graph framelets \cite{dong2017sparse,zheng2022decimated}, a type of tight wavelet frame, further allows separate modelling of low-pass and high-pass signal information, and has been considered to define graph convolution as in \cite{chendirichlet,zheng2021framelets,zheng2022decimated}. It has been shown  that the graph framelet convolution allows more flexible control of approximate (via low-pass filters) and detailed information (via high-pass filters) with great robustness and efficiency, achieving the state-of-the-art results on multiple graph learning tasks \cite{chendirichlet,yang2022quasi,zheng2021framelets,zhou2021graph,zhou2021spectral,zou2022simple}.

Along with the developments of more advanced models, theoretical understanding of both the power and pitfalls of graph neural networks has called for great attention. Currently, there exists many known limitations of GNNs, including over-smoothing \cite{cai2020note,li2019deepgcns}, over-squashing \cite{alon2020bottleneck,topping2021understanding}, limited expressive power \cite{maron2019provably,xu2018powerful} and poor performance on heterophilic graphs \cite{pei2019geom,zhu2020beyond}. In an attempt to better resolve the above issues, many studies have tried to understand GNNs through various frameworks, such as dynamical systems and differential equations \cite{chamberlain2021grand,di2022graph,oono2019graph,thorpe2021grand,wang2021dissecting}, Riemannian geometry \cite{ni2019community,topping2021understanding}, algebraic topology \cite{bodnar2022neural,hansen2020sheaf}. Nevertheless, existing analyses are limited to either spatial GNNs or spectral GNNs based on the Fourier transform. In fact, it has been empirically observed that wavelet/framelet-based models tend to alleviate the previously known issues, such as mitigating the issue of over-smoothing \cite{wang2021deep} and achieving good performance on heterophilic graphs \cite{chendirichlet}. Despite the success of wavelet-based models, comparatively little is known on how the multi-scale and frequency-separation properties provided by graph wavelets/framelets potentially avoid the aforementioned pitfalls and enhance the learning capacity of GNNs. 

In this paper, we particularly focus on the graph framelet-based models and analyze the model behaviors from the perspective of energy gradient flows. In physics, the evolution 
of particles is often modelled as differential equations that minimize an energy, known as the gradient flows. The energy functional and its gradient flow provide a characterization of particles' states and movements, which are essential to understand the dynamics. Recently, such idea has been adapted to graph neural networks \cite{di2022graph}. By modelling GNNs as (discretized) gradient flows, one can study the limiting behaviors of GNNs, which are closely related to the notions of over-smoothing, over-separating and heterophilic graphs. Particularly, \cite{di2022graph} characterizes the dynamics of GNNs in terms of low-frequency or high-frequency dominance or both. It is known that low-frequency-dominant (LFD) models tend to perform well on homophilic graphs (where neighbouring nodes are likely to come from the same cluster) \cite{nt2019revisiting,wu2019simplifying,klicpera2019diffusion,di2022graph} while in contrast, high-frequency-dominant (HFD) models tend to perform well on heterophilic graphs \cite{bo2021beyond,di2022graph}.
In this regard, models that can be both LFD and HFD are usually preferred. 

To the best of our knowledge, this is the first work that provides a theoretical understanding of multi-scale graph neural networks and justification for its good empirical performance. Specifically, we highlight our main contributions as follows. 
\begin{itemize}
    \item We show the framelet convolutions, either spatial-based \cite{chendirichlet} or spectral-based \cite{zheng2021framelets} can be viewed as a discretized gradient flow of some energy. From this point of view, we prove that the framelet-based GNNs can induce both LFD and HFD dynamics, thus theoretically explaining its effectiveness on heterophilic graphs and its capability to potentially avoid over-smoothing, as often empirically observed. 
    
    \item We then define a generalized energy via framelet decomposition. We show such energy includes the energy proposed in \cite{di2022graph}, which itself is a generalization of the graph Dirichlet energy. We then propose a GNN model, namely gradient flow based framelet graph convolution (GradF-UFG), as discretization of the proposed generalized energy. 
    
    \item We show the proposed GradF-UFG includes the framelet-based GNNs as special cases. We also connect the proposed model with the recently introduced energy enhanced framelet convolution and analyze its behaviors. We explain how the proposed model provides more flexibility compared to existing works. 
\end{itemize}

\paragraph{Organization.}
The rest of the paper is organized as follows. Section \ref{sect_related_work} first summarizes 
the related works on continuous
GNNs and its connections to dynamical systems and differential equations. We also provide an overview on framelet-based graph representation learning. In section \ref{sect_prelinm}, we review the preliminary knowledge on graphs, graph framelets and framelet convolutions. We also introduce the notions of graph Dirichlet energy, gradient flow along with the definitions of LFD and HFD dynamics. Section \ref{sect_dirichlet_energy_framelet} starts with identifying a connection between the spatial framelet convolution with Dirichlet energy and also characterizes its asymptotic behaviors from the lens of gradient flow. In section \ref{sect_genalized_energy}, we define the framelet generalized energy and propose a GNN model based on its gradient flow, where we also connect to existing works. Finally, in section \ref{sect_spec_grad_flow}, we perform similar analysis for the spectral framelet convolution.

\section{Related works}
\label{sect_related_work}

\paragraph{Dynamical systems, differential equations and GNNs.} 
Neural ODE \cite{chen2018neural} is introduced as a continuous version of ResNet \cite{he2016deep}. Since then, many work has studied its counterparts on graphs and proposed continuous GNNs, such as \cite{avelar2019discrete,poli2019graph,xhonneux2020continuous}. Another parallel research direction aims to understand GNNs from continuous dynamical systems and differential equations. GCN \cite{kipf2016semi}, one of the most popular GNN models, can be viewed as a discrete Markov process \cite{oono2019graph} and its linear version is verified as the discretized (isotropic) graph heat diffusion \cite{wang2021dissecting}, which minimizes the graph Dirichlet energy. Similarly, GAT \cite{velivckovic2018graph} is related to an anisotropic heat diffusion on graphs as shown in \cite{chamberlain2021grand}. Furthermore, many recent studies introduce GNNs that are inspired by physical systems and diffusion PDEs, including heat diffusion with a source term \cite{thorpe2021grand}, non-Euclidean Beltrami flow \cite{chamberlain2021beltrami}, wave diffusion equation \cite{eliasof2021pde}, networks of coupled oscillators \cite{rusch2022graph}, nonlinear anisotropic diffusion \cite{chen2022optimization}. The recent work \cite{di2022graph} provides a framework for analyzing continuous formulation of GNN models as gradient flows of some energy. In particular, the work proposes a general energy where its gradient flow leads to many existing GNN models. Under such framework, \cite{di2022graph} verifies that many models can only lead to LFD dynamics, including GCN \cite{kipf2016semi}, GRAND \cite{chamberlain2021grand}, CGNN \cite{xhonneux2020continuous} and PDE-GCN \cite{eliasof2021pde}.

\paragraph{Framelet-based graph learning.}
The work \cite{dong2017sparse} provides an efficient way to compute graph framelet transforms via Chebyshev polynomial approximation, which allows practical applications such as semi-supervised clustering. Graph spectral framelet convolution has been proposed in \cite{zheng2021framelets,zheng2022decimated}, which is shown to empirically enhance the graph neural networks with its robustness and multi-scale properties. Later, graph framelets have been integrated for graph signal denoising \cite{zhou2021graph}, robust graph embedding \cite{zhou2022graph}, dynamic graph \cite{zhou2021spectral} and directed graph learning \cite{zou2022simple}, exhibiting great improvement in model performance. More recently, \cite{chendirichlet} proposes a spatial graph framelet convolution and shows its close connection to the Dirichlet energy. The paper also introduces a perturbed Dirichlet energy in order to mitigate the over-smoothing issue.

\section{Preliminaries}
\label{sect_prelinm}

\paragraph{Graphs and graph convolution.}
A graph $\gG = (\gV_\gG, \gE_\gG)$ of $n$ nodes (with self-loops) can be represented by graph adjacency matrix $\bA \in \sR^{n \times n}$. In this work, we assume the graph is undirected and unweighted, i.e., $\bA$ is symmetric with $a_{ij} =1$ if $(i,j) \in \gE_\gG$ and $0$ otherwise. In addition, we consider the symmetric normalized adjacency matrix as $\widehat{\bA} = \bD^{-1/2} \bA \bD^{-1/2}$ where $\bD$ is the diagonal degree matrix, with the $i$-th diagonal entry given by $d_{i} = \sum_{j} a_{ij}$, the degree of node $i$. The normalized graph Laplacian is given by $\widehat{\bL} = \bI_n - \widehat{\bA}$. We use $\rho_{L}$ to denote the largest eigenvalue (also called the highest frequency) of $\widehat{\bL}$. From the spectral graph theory \cite{chung1997spectral}, $\rho_L \leq 2$ and the equality holds if and only if there exists a connected component of the graph $\gG$ that is bipartite. 

Graph convolution network (GCN) \cite{kipf2016semi} defines the layer-wise propagation rule via the normalized adjacency matrix as
\begin{equation}
    \bH(\ell + 1) = \sigma \big( \widehat{\bA} \bH(\ell) \bW^\ell  \big), \label{eq_classic_gcn}
\end{equation}
where $\bH(\ell)$ denotes the feature matrix at layer $\ell$ with $\bH(0) = \bX \in \sR^{n \times c}$, the input features (also called input signals), and $\bW^\ell$ is the learnable feature transformation. It can be shown that GCN corresponds to a localized filter via graph Fourier transform, i.e., $\bh(\ell+1) = \bU^\top (\bI_n - \bLambda) \bU \bh(\ell)$ 
where $\bU, \bLambda$ are from the eigendecomposition $\widehat{\bL} = \bU^\top \bLambda \bU$ and $\bU \bh$ is known as the Fourier transform of a graph signal $\bh \in \sR^n$. In this paper, we let $\{ (\lambda_i, \bu_i) \}_{i=1}^n$ be the set of eigen-pairs of $\widehat{\bL}$ where $\bu_i$ are the row vectors of $\bU$.

\paragraph{Graph framelets and framelet convolution.}
Graph (undecimated) framelets \cite{zheng2022decimated} are defined via a filter bank $\eta = \{ a; b^{(1)}, ..., b^{(L)} \}$ and its induced (complex-valued) scaling functions $\Psi = \{ {\alpha}; \beta^{(1)}, ..., \beta^{(L)} \}$ where $L$ is the number of high-pass filters. Particularly, it satisfies that $\widehat{\alpha}(2\xi) = \widehat{a}(\xi) \widehat{\alpha}(\xi)$ and $\widehat{\beta^{(r)}}(2\xi) = \widehat{b^{(r)}}(\xi) \widehat{\alpha}(\xi)$ for all $\xi \in \sR, r =1,...,L$ where $\widehat{\alpha}, \widehat{\beta^{(r)}}$ denote the Fourier transform of $\alpha, \beta^{(r)}$ and $\widehat{a}, \widehat{b^{(r)}}$ denote the Fourier series of $a, b^{(r)}$ respectively. The graph framelets are defined by $\varphi_{j,p}(v) = \sum_{i=1}^n \widehat{\alpha}\big( {\lambda_i}/{2^j} \big) u_i(p) u_i(v)$ and $\psi^r_{j,p}(v) = \sum_{i = 1}^n \widehat{\beta^{(r)}} \big( \lambda_i/ 2^j \big) u_i(p) u_i(v)$ for $r = 1,..., L$ and for scale level $j = 1,...,J$. We use $u_i(v)$ to represent the eigenvector $\bu_i$ at node $v$. $\varphi_{j,p}$ and $\psi^r_{j,p}$ are known as the \textit{low-pass framelets} and \textit{high-pass framelets} at node $p$. 

The \textit{framelet coefficients} of a graph signal $\bh$ are given by $\bv_{0} = \{ \langle \bvarphi_{0,p} , \bh \rangle \}_{p \in \gV_\gG}$,
$\bw_{j}^r = \{ \langle \bpsi_{j,p}^r, \bh \rangle \}_{p \in \gV_\gG}$. For a multi-channel signal $\bH \in \sR^{n \times c}$, we can compactly write its framelet coefficients as
\begin{equation*}
    \bV_0 = \bU^\top \widehat{\alpha} \Big( \frac{\bLambda}{2} \Big) \bU \bH, \quad \bW_j^r = \bU^\top \widehat{\beta^{(r)}} \Big( \frac{\bLambda}{2^{j+1}} \Big) \bU \bH, \quad \forall j = 0,...,J, r = 1,...,L
\end{equation*}
where $\widehat{\alpha}, \widehat{\beta^{(r)}}$ applies elementwise to the diagonal of $\bLambda$, and $\bV_0, \bW_{j}^r$ are respectively the low-pass and high-pass coefficients. Define the framelet transform matrices $\gW_{0, J}, \gW_{r,j}$ such that $\bV_0 = \gW_{0,J} \bH = \bU^\top \bLambda_{0,J} \bU \bH$, and $\bW_{j}^r = \gW_{r,j} \bH = \bU^\top \bLambda_{r,j} \bU \bH$ for $r = 1,...,L, j = 1,...,J$, where $\bLambda_{r,j}$ is a diagonal matrix with entries $(\bLambda_{0,J})_{ii} = \widehat{\alpha}(\lambda_i/2)$ and $(\bLambda_{r,j})_{ii} = \widehat{\beta^{(r)}} (\lambda_i/2^{j+1})$. 
By the tightness of the framelet transform, we have $\bLambda_{0,J}^2 + \sum_{r,j} \bLambda_{r,j}^2 = \bI_n$ and the framelet decomposition and reconstruction are invertible, i.e., $\gW_{0, J}^\top \gW_{0,J} \bH + \sum_{r,j} \gW_{r,j}^\top \gW_{r,j} \bH = \bH$.\footnote{With a slight abuse of notations, we use $\top$ to also represent conjugate transpose if the filters are complex-valued.}  This property allows unique decomposition of any graph signal onto the spectral framelet domain. Refer to  \cite{zheng2021framelets,zheng2022decimated} for more detailed discussions. 

The \textit{spectral graph framelet convolution} is proposed in \cite{zheng2021framelets}, which is similar to the graph (Fourier) convolution by applying a filter on the spectral domain before reconstructing the signal. 
The layer-wise propagation rule is given by $\bH(\ell + 1) = \sigma \big( \gW_{0,J}^\top {\rm diag}(\btheta_{0,J}) \gW_{0,J} \bH(\ell) \bW^\ell +  \sum_{r,j} \gW_{r,j}^\top {\rm diag}(\btheta_{r,j}) \gW_{r,j} \bH(\ell) \bW^\ell \big)$,
where $\btheta_{r,j} \in \sR^{n}$ is a learnable filter coefficient and $\bW^\ell$ is a shared weight matrix across all $r,j$ for layer $\ell$. Rather than performing spectral filtering as in \cite{zheng2021framelets}, the \textit{spatial graph framelet convolution} performs a spatial message passing over the spectral framelet domain \cite{chendirichlet} as $\bH(\ell + 1) = \gW^\top_{0,J} \sigma \big( \widehat{\bA} \gW_{0,J} \bH(\ell)  \bW_{0,J}^{\ell} \big) + \sum_{r,j} \gW^\top_{r,j} \sigma \big( \widehat{\bA} \gW_{r,j} \bH(\ell)  \bW_{r,j}^{\ell} \big)$.

In this paper, our subsequent analysis focuses on the spatial framelet convolution (or just framelet convolution) due to its connection to the Dirichlet energy as shown in Section \ref{sect_dirichlet_energy_framelet}. Nevertheless, we also obtain similar conclusions for spectral framelet convolution in Section \ref{sect_spec_grad_flow}.


\paragraph{Dirichlet energy and gradient flow.}
The graph Dirichlet energy provides a measure of smoothness of a graph signal $\bH \in \sR^{n \times c}$, which is defined as $E(\bH) = \frac{1}{4} \sum_{i,j} a_{ij} \| \bh_i /\sqrt{d_i} - \bh_j /\sqrt{d_j} \|^2 = \frac{1}{2} \trace(\bH^\top \widehat{\bL} \bH)$, where $\bh_i \in \sR^c$ is the signal vector of node $i$. The gradient flow of the Dirichlet energy yields the so-called graph heat equation \cite{chung1997spectral} as $\dot{\bH}(t) = - \nabla E(\bH(t)) = - \widehat{\bL} \bH(t)$. Its Euler discretization leads to the propagation of linear GCN models \cite{wu2019simplifying,wang2021dissecting}. The process is called Laplacian smoothing \cite{li2018deeper} and it converges to 
the kernel of $\widehat{\bL}$, i.e., ${\rm ker}(\widehat{\bL})$ as $t \xrightarrow{} \infty$, resulting in non-separation of nodes with same degrees, known as the over-smoothing issue. 

A recent work \cite{di2022graph} also shows that even with nonlinear activation and weight matrix as in classic GCN \eqref{eq_classic_gcn}, the process is still dominated by the low frequency, thus eventually converging to the kernel of $\widehat{\bL}$, for almost every initialization. To quantify such behavior, \cite{di2022graph} considers a general dynamics as $\dot{\bH}(t) = {\rm GNN}_\theta (\bH(t), t)$, with ${\rm GNN}_\theta(\cdot)$ as an arbitrary graph neural network function, and also characterizes its behavior by low/high-frequency-dominance (L/HFD). 

\begin{definition}[\cite{di2022graph}]
\label{def_hfd_lfd}
$\dot{\bH}(t) = {\rm GNN}_\theta (\bH(t), t)$ is Low-Frequency-Dominant (LFD) if $E \big(\bH(t)/ \| \bH(t) \| \big) \xrightarrow{} 0$ as $t \xrightarrow{} \infty$, and is High-Frequency-Dominant (HFD) if $E \big(\bH(t)/ \| \bH(t) \| \big) \xrightarrow{} \rho_L/2$ as $t \xrightarrow{} \infty$. 
\end{definition}

\begin{lemma}[\cite{di2022graph}]
\label{lemma_hfd_lfd}
A GNN model is LFD (resp. HFD) if and only if for each $t_j \xrightarrow{} \infty$, there exists a subsequence indexed by $t_{j_k} \xrightarrow{} \infty$ and $\bH_{\infty}$ such that $\bH(t_{j_k})/\| \bH(t_{j_k})\| \xrightarrow{} \bH_{\infty}$ and $\widehat{\bL} \bH_{\infty} = 0$ (resp. $\widehat{\bL} \bH_{\infty} = \rho_L \bH_{\infty}$).
\end{lemma}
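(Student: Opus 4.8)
The plan is to reduce everything to two elementary facts about the quadratic energy $E$ restricted to the unit sphere $\gS = \{ \bH \in \sR^{n\times c} : \|\bH\| = 1\}$. First, $\gS$ is compact and $E$ is continuous and $2$-homogeneous, so $E(\bH(t)/\|\bH(t)\|) = E(\bH(t))/\|\bH(t)\|^2$ is precisely the value of the continuous function $E|_{\gS}$ at the normalized iterate (this implicitly uses that $\bH(t)\neq 0$ for large $t$, which is already built into Definition~\ref{def_hfd_lfd}). Second, since $\widehat{\bL}$ is positive semidefinite with largest eigenvalue $\rho_L$, the Rayleigh-type bound $0 \le \trace(\bH^\top \widehat{\bL} \bH) \le \rho_L \|\bH\|^2$ holds, i.e.\ $0 \le E(\bH) \le \rho_L/2$ on $\gS$, with the left equality attained iff every column of $\bH$ lies in ${\rm ker}(\widehat{\bL})$ (equivalently $\widehat{\bL}\bH = 0$) and the right equality attained iff every column lies in the $\rho_L$-eigenspace (equivalently $\widehat{\bL}\bH = \rho_L \bH$). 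This second fact is the only genuine linear-algebra input: one writes $\trace(\bH^\top \widehat{\bL} \bH) = \sum_k \bh^{(k)\top} \widehat{\bL} \bh^{(k)}$ over columns, applies the scalar spectral bound termwise, and checks the equality case columnwise via $\widehat{\bL} = \bU^\top \bLambda \bU$.

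For the forward implication, suppose the flow is LFD, so $E(\bH(t)/\|\bH(t)\|) \to 0$. Given any $t_j \to \infty$, the normalized iterates lie on the compact set $\gS$, hence some subsequence $t_{j_k} \to \infty$ satisfies $\bH(t_{j_k})/\|\bH(t_{j_k})\| \to \bH_\infty \in \gS$. Continuity of $E$ and the LFD hypothesis force $E(\bH_\infty) = \lim_k E(\bH(t_{j_k})/\|\bH(t_{j_k})\|) = 0$, and fact two gives $\widehat{\bL}\bH_\infty = 0$. The HFD case is identical with $0$ replaced by $\rho_L/2$ and ${\rm ker}(\widehat{\bL})$ by the top eigenspace.

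For the converse I would argue by contradiction. If the subsequence property holds but the flow is not LFD, then $E(\bH(t)/\|\bH(t)\|)$ does not tend to $0$, so there are $\varepsilon > 0$ and $t_j \to \infty$ with $E(\bH(t_j)/\|\bH(t_j)\|) \ge \varepsilon$ for all $j$. Applying the hypothesis to this particular sequence yields a subsequence $t_{j_k}$ and a limit $\bH_\infty$ with $\bH(t_{j_k})/\|\bH(t_{j_k})\| \to \bH_\infty$ and $\widehat{\bL}\bH_\infty = 0$, hence $E(\bH_\infty) = 0$; but then continuity gives $E(\bH(t_{j_k})/\|\bH(t_{j_k})\|) \to 0$, contradicting the lower bound $\varepsilon$. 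So the flow is LFD. The HFD direction is symmetric: one uses that $\widehat{\bL}\bH_\infty = \rho_L\bH_\infty$ implies $E(\bH_\infty) = \rho_L/2$ and that $E \le \rho_L/2$ on $\gS$, so failure of $E \to \rho_L/2$ means $E$ stays below $\rho_L/2 - \varepsilon$ along some sequence, again contradicted by passing to the guaranteed subsequence.

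The argument is essentially soft analysis — compactness of the sphere plus continuity of $E$ — so there is no hard step in the usual sense. The only point requiring mild care is the equality characterization in the Rayleigh bound in the multi-channel case $c > 1$: one must check that equality in the summed bound forces every column (not just the column-averaged quantity) into the relevant extremal eigenspace, and handle zero columns, all of which follow from the columnwise spectral expansion.
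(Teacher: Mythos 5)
Your proof is correct: the compactness-of-the-sphere plus continuity argument, together with the Rayleigh-quotient equality cases $E(\bH)=0 \Leftrightarrow \widehat{\bL}\bH=0$ and $E(\bH)=\rho_L\|\bH\|^2/2 \Leftrightarrow \widehat{\bL}\bH=\rho_L\bH$ (checked columnwise), is exactly the standard route, and your handling of the converse by contradiction and of zero columns is sound. Note that the paper itself states this lemma without proof, importing it from \cite{di2022graph}, and the argument given there is essentially the one you reconstructed, so there is no substantive divergence to report.
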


\begin{remark}[LFD, HFD, homophilic and heterophilic]
The characterization of LFD and HFD has direct consequences on model performance for homophilic and heterophilic graph datasets. As shown in Lemma \ref{lemma_hfd_lfd}, a GNN model is LFD if and only if there exists a subsequence of $\bH(t)$ that converges to ${\rm ker}(\widehat{\bL})$ (thus smoothing) and is HFD if and only if there exists a subsequence that converges to the eigenvector associated with the largest frequency of $\widehat{\bL}$ (thus separating). In the case of homophilic graph, where neighbouring nodes tend to share the same label, low-frequency information is more important while in contrast for heterophilic graph, high-frequency information is more important. Thus, ideally, a model should be flexible enough to accommodate both LFD and HFD dynamics. 
\end{remark}

\section{Dirichlet energy, gradient flow and framelet convolution}
\label{sect_dirichlet_energy_framelet}

This section studies the following linearized spatial framelet convolution proposed in \cite{chendirichlet} and its connection to energy and gradient flow. 
\begin{equation}
    \bH(\ell + 1) = \gW^\top_{0,J}  \widehat{\bA} \gW_{0,J} \bH(\ell)  \bW_{0,J} + \sum_{r,j} \gW^\top_{r,j} \widehat{\bA} \gW_{r,j} \bH(\ell)  \bW_{r,j}. \label{eq_main_sp_frame_redefine}
\end{equation}
The consideration of the linear dynamics as in \eqref{eq_main_sp_frame_redefine} is driven by the definition of an energy, which is often a quadratic form \cite{di2022graph}. Further, it has been shown in \cite{wang2022powerful,wu2019simplifying} that removing nonlinearity often does not severely harm the prediction performance. Nevertheless, we investigate the effect of nonlinearity on the gradient flow in Section \ref{activation_sect}. Furthermore, we require the weight matrices $\bW_{r,j}^\ell = \bW_{r,j}$ to be fixed across all layers and to be \textit{symmetric} (in order to ensure that it leads to a gradient flow of some energy as discussed in Section \ref{sect_genalized_energy}). The symmetric assumption is not strict, given the universal approximation results in \cite{hu2019exploring} and has also been considered in \cite{di2022graph}.


We start with a result showing that the total Dirichlet energy for all framelet coefficients (i.e., $\gW_{r,j} \bH$) is identical to the Dirichlet energy on the spatial domain.

\begin{proposition}[Energy conservation via framelet decomposition \cite{chendirichlet}]
\label{prop_energy_conserve}
For a graph signal $\bH$, let $E_{0,J}(\bH) = \frac{1}{2} \trace \big( (\gW_{0,J} \bH)^\top \widehat{\bL} \gW_{0,J} \bH \big)$ and similarly $E_{r,j}(\bH) = \frac{1}{2} \trace \big( (\gW_{r,j} \bH)^\top \widehat{\bL} \gW_{r,j} \bH \big)$ for $r=1,...,L, j = 1,...,J$. Then we have $E(\bH) = E_{0,J}(\bH) + \sum_{r,j} E_{r,j}(\bH)$. 
\end{proposition}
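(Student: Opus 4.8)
The plan is to reduce everything to the spectral picture, where the framelet transform matrices $\gW_{0,J}$ and $\gW_{r,j}$ act diagonally after conjugation by $\bU$. First I would write each partial energy in terms of the diagonal spectral multipliers: since $\gW_{0,J} = \bU^\top \bLambda_{0,J} \bU$ and $\widehat\bL = \bU^\top \bLambda \bU$, and $\bU$ is orthogonal (unitary in the complex case), we get
\begin{equation*}
  E_{0,J}(\bH) = \tfrac12 \trace\big( \bH^\top \gW_{0,J}^\top \widehat\bL \gW_{0,J} \bH \big) = \tfrac12 \trace\big( \bH^\top \bU^\top \bLambda_{0,J}^2 \bLambda \bU \bH \big),
\end{equation*}
using that $\bLambda_{0,J}$, $\bLambda$ are diagonal and hence commute, and $\bLambda_{0,J}^\top \bLambda_{0,J} = \bLambda_{0,J}^2$. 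The same computation gives $E_{r,j}(\bH) = \tfrac12 \trace\big( \bH^\top \bU^\top \bLambda_{r,j}^2 \bLambda \bU \bH \big)$.

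Next I would sum these identities over the low-pass term and all $(r,j)$. Factoring out the common $\tfrac12 \trace\big( \bH^\top \bU^\top (\,\cdot\,) \bLambda \bU \bH \big)$, the bracketed matrix becomes $\bLambda_{0,J}^2 + \sum_{r,j} \bLambda_{r,j}^2$, which by the tightness (partition-of-unity) property stated in the excerpt equals $\bI_n$. Hence the sum collapses to $\tfrac12 \trace\big( \bH^\top \bU^\top \bLambda \bU \bH \big) = \tfrac12 \trace(\bH^\top \widehat\bL \bH) = E(\bH)$, which is exactly the claim.

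The only points requiring care, rather than an actual obstacle, are bookkeeping ones: one must use that all the spectral multiplier matrices are simultaneously diagonalized by the \emph{same} $\bU$ (so they commute and the cross-terms never appear), and in the complex-valued filter case one should read $\top$ as conjugate transpose throughout, so that $\bLambda_{0,J}^\top \bLambda_{0,J} = |\bLambda_{0,J}|^2$ and the tightness identity $|\bLambda_{0,J}|^2 + \sum_{r,j} |\bLambda_{r,j}|^2 = \bI_n$ is what gets invoked; the trace expressions remain real because $\widehat\bL$ is real symmetric and $\bH$ real. Everything else is a routine consequence of linearity and cyclicity of the trace, so I do not anticipate a genuine difficulty — the proposition is essentially a restatement of Parseval/tightness weighted by the eigenvalues of $\widehat\bL$.
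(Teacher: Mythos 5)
Your proof is correct and follows essentially the same route as the paper: diagonalize $\gW_{0,J}$, $\gW_{r,j}$ and $\widehat{\bL}$ in the common eigenbasis $\bU$ and invoke the tightness identity $\bLambda_{0,J}^2 + \sum_{r,j} \bLambda_{r,j}^2 = \bI_n$, which is exactly the argument the paper uses (for the generalized statement in Proposition \ref{prop_gen_energy_conserve}, of which this is the special case $\bOmega = \bW = \bI_c$). Your remarks on the conjugate-transpose reading of $\top$ for complex filters are consistent with the paper's own footnoted convention.
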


Proposition \ref{prop_energy_conserve} immediately shows that the gradient flow of the Dirichlet energy can be equivalently expressed as $\dot{\bH}(t) = - \nabla E(\bH(t)) = - \Big( \gW_{0,J}^\top \widehat{\bL} \gW_{0,J} \bH(t) + \sum_{r,j} \gW_{r,j}^\top \widehat{\bL} \gW_{r,j} \bH(t) \Big)$ where its discretization (with a stepsize $\tau = 1$) leads to 
\begin{align*}
    \bH(\ell+1) = \gW_{0,J}^\top \widehat{\bA} \gW_{0,J} \bH(\ell) + \sum_{r,j} \gW_{r,j}^\top \widehat{\bA} \gW_{r,j} \bH(\ell)
\end{align*}
due to $\gW_{0,J}^\top \gW_{0,J} + \sum_{r,j} \gW_{r,j}^\top \gW_{r,j} = \bI_n$. 
Comparing the above dynamics to the framelet convolution in \eqref{eq_main_sp_frame_redefine}, we see the only difference is in the weight matrices $\bW_{r,j}$. In fact, if the weight matrices $\bW_{r,j}$ are the same across all $r,j$, the dynamics would be equivalent to that of GCN, thus inducing a LFD dynamics for almost every $\bH(0)$ \cite[Theorem 4.3]{di2022graph}. To see this, let $\bW_{0,J} = \bW_{r,j} =\bW$. Then \eqref{eq_main_sp_frame_redefine} becomes 
\begin{align*}
    \bH(\ell + 1) &= \big( \gW_{0,J}^\top \widehat{\bA} \gW_{0,J} +  \sum_{r,j} \gW_{r,j}^\top \widehat{\bA} \gW_{r,j} \big) \bH(\ell) \bW \\
    &= \Big( \bU^\top \bLambda_{0,J}^2 (\bI_n - \bLambda) \bU + \sum_{r,j} \bU^\top \bLambda_{r,j}^2 (\bI_n - \bLambda) \bU \Big) \bH(\ell) \bW \\
    &= \bU^\top (\bI_n - \bLambda) \bU  \bH(t) \bW = \widehat{\bA} \bH(\ell) \bW
\end{align*}
where we use the fact that $\bLambda_{0,J}^2 + \sum_{r,j} \bLambda_{r,j}^2 = \bI_n$.
However, we show in Theorem \ref{thm_frame_lfdhfd} that with different $\bW_{r,j}$, the framelet convolution is able to induce a HFD dynamics. 

For the purpose of analysis, we focus on the Haar-type filter \cite{dong2017sparse}, which is commonly considered for framelet convolution, such as \cite{chendirichlet,zheng2021framelets,zhou2021graph,zhou2021spectral}. We remark that similar analysis can be performed for other filters. 
The filter bank of Haar filter is given by a low-pass filter $\widehat{a}(\xi) = \cos(\xi/2)$, and a high-pass filter $\widehat{b}(\xi) = \sin(\xi/2)$. We focus on the cases when the scale is $J =1$ and $J = 2$, which are sufficient for practical applications \cite{chendirichlet,zheng2021framelets}.
When $J = 1$, we can explicitly write the induced scaling functions as $\widehat{\alpha}(\bLambda/2) = \cos(\bLambda/8)$ and $\widehat{\beta}(\bLambda/2) = \sin(\bLambda/8)$. When $J = 2$, the scaling functions are given in \cite[Proposition 6]{chendirichlet} as $\widehat{\alpha}(\bLambda/2) = \cos^2(\bLambda/8) \cos(\bLambda/16)$, $\widehat{\beta}(\bLambda/2) = \sin(\bLambda/8) \cos(\bLambda/16)$, $\widehat{\beta}(\bLambda/4) = \sin(\bLambda/16)$.

\begin{theorem}
\label{thm_frame_lfdhfd}
The (spatial) graph framelet convolution \eqref{eq_main_sp_frame_redefine} with Haar-type filter can induce both LFD and HFD dynamics. Specifically, let $\bW_{0, J} = \bI_c, \bW_{r,j} = \lambda^W \bI_c$ for $r=1,...,L, j = 1,...,J$. Then the framelet convolution can be HFD when $|\lambda^W| > 1$ and is sufficiently large. On the other hand, when $|\lambda^W| \leq 1$, the framelet convolution can only be LFD. 
\end{theorem}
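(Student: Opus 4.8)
The plan is to diagonalize the layer map and reduce both assertions to locating the dominant entry of the resulting spectral multiplier. Substituting $\bW_{0,J}=\bI_c$ and $\bW_{r,j}=\lambda^W\bI_c$ into \eqref{eq_main_sp_frame_redefine} and using $\gW_{0,J}=\bU^\top\bLambda_{0,J}\bU$, $\gW_{r,j}=\bU^\top\bLambda_{r,j}\bU$, $\widehat{\bA}=\bU^\top(\bI_n-\bLambda)\bU$ together with the tightness identity $\bLambda_{0,J}^2+\sum_{r,j}\bLambda_{r,j}^2=\bI_n$, the iteration collapses to $\bH(\ell+1)=\bU^\top\bGamma\bU\,\bH(\ell)$ with $\bGamma$ diagonal, $\gamma(\lambda_i)=(1-\lambda_i)\big[c_J(\lambda_i)+\lambda^W\big(1-c_J(\lambda_i)\big)\big]$, where $c_J(\lambda):=(\bLambda_{0,J})_{ii}^2$ equals $\cos^2(\lambda/8)$ for $J=1$ and $\cos^4(\lambda/8)\cos^2(\lambda/16)$ for $J=2$. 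Note that $c_J(0)=1$, so $\gamma(0)=1$ always, and $\tfrac12<c_J(\lambda)<1$ for $\lambda\in(0,2]$.

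The second step is to translate ``dominant $|\gamma(\lambda_i)|$'' into the L/HFD classification. Since $\bH(\ell)=\bU^\top\bGamma^\ell\bU\,\bH(0)$, writing $M:=\max_i|\gamma(\lambda_i)|$ and $S:=\{i:|\gamma(\lambda_i)|=M\}$, the normalized iterate $\bH(\ell)/\|\bH(\ell)\|$ is, up to scaling and the signs $(\gamma(\lambda_i)/M)^\ell\in\{\pm1\}$ for $i\in S$, supported on $\mathrm{span}\{\bu_i:i\in S\}$ with the remaining components decaying; passing to a subsequence along which these signs are constant gives convergence to a vector in that span. Hence, for almost every $\bH(0)$ (so the relevant spectral projection is nonzero), Lemma~\ref{lemma_hfd_lfd} shows the dynamics is LFD iff $S=\{i:\lambda_i=0\}$ and HFD iff $S=\{i:\lambda_i=\rho_L\}$. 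Both claims therefore reduce to comparing $|\gamma|$ over the spectrum $\{\lambda_i\}\subseteq[0,\rho_L]$, keeping in mind $|\gamma(0)|=1$.

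For the HFD claim I would take a graph whose largest frequency $\rho_L$ is close to $2$ (e.g.\ a high-degree regular bipartite graph with self-loops, for which $\rho_L=2d/(d+1)$). As $|\lambda^W|\to\infty$, $|\gamma(\lambda)|/|\lambda^W|\to f_J(\lambda):=|1-\lambda|\,(1-c_J(\lambda))$ uniformly on $[0,\rho_L]$; on $(1,\rho_L]$ both $|1-\lambda|$ and $1-c_J(\lambda)$ are positive and increasing, so $f_J$ is increasing there, and a direct trigonometric estimate gives $\sup_{[0,1]}f_J<f_J(\rho_L)$ once $\rho_L$ is large enough, whence $f_J$ is uniquely maximized over $[0,\rho_L]$ at $\rho_L$. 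Thus for $|\lambda^W|$ sufficiently large one has $|\gamma(\rho_L)|>|\gamma(\lambda_i)|$ for every other eigenvalue and $|\gamma(\rho_L)|\ge|1-\rho_L|\big(|\lambda^W|(1-c_J(\rho_L))-c_J(\rho_L)\big)\to\infty$, so $S$ is exactly the $\rho_L$-eigenspace and the model is HFD. For the $|\lambda^W|\le1$ claim I argue uniformly over all graphs: for $\lambda\in(0,\rho_L]\subseteq(0,2]$ the bracket $c_J(\lambda)+\lambda^W(1-c_J(\lambda))$ lies in $(0,1]$ (upper bound $c_J+(1-c_J)=1$ since $\lambda^W\le1$, lower bound $2c_J(\lambda)-1>0$ since $\lambda^W\ge-1$), while $|1-\lambda|\le1$, so $|\gamma(\lambda)|\le|1-\lambda|\le1=|\gamma(0)|$, with equality only if $\lambda=0$ or ($\lambda^W=1$ and $\lambda=\rho_L=2$). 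Hence the spectral maximum of $|\gamma|$ is always attained at $\lambda=0$ and never strictly at $\rho_L$, so the model can never be HFD, and is LFD for almost every initialization whenever $|\lambda^W|<1$ or $\rho_L<2$. (It can also be LFD for some $|\lambda^W|>1$ — e.g.\ on the complete graph with self-loops, where $\rho_L=1$ and $\gamma(\rho_L)=0$ — which completes the ``both'' assertion.)

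I expect the main obstacle to be the second step: making the passage from the dominant-multiplier picture to Lemma~\ref{lemma_hfd_lfd} rigorous — in particular the genericity of $\bH(0)$ and the handling of sign oscillations when the dominant multiplier is negative (as it is at $\rho_L$, since $1-\rho_L<0$) — together with the explicit trigonometric bookkeeping behind $\sup_{[0,1]}f_J<f_J(\rho_L)$ for $J\in\{1,2\}$ and $\rho_L$ near $2$, and the all-graphs estimate $|\gamma(\lambda)|\le1$ when $|\lambda^W|\le1$.
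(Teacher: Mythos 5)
Your proposal is correct and follows essentially the same route as the paper's own proof: reduce the iteration to the scalar spectral multiplier $(1-\lambda)\big[c_J(\lambda)+\lambda^W(1-c_J(\lambda))\big]$, show its maximum over the spectrum sits at $\lambda=0$ when $|\lambda^W|\le 1$ and (for $\rho_L$ bounded away from $1$) at $\rho_L$ when $|\lambda^W|$ is sufficiently large, and then pass to normalized iterates and Lemma~\ref{lemma_hfd_lfd}. Your treatment is if anything slightly more careful than the paper's (unified $c_J$ for $J=1,2$, the explicit limit function $f_J$, the tie case $\lambda^W=1,\ \rho_L=2$, and the sign/subsequence issue), so no gap to report.
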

\begin{proof}
Consider the linearized framelet convolution \eqref{eq_main_sp_frame_redefine} with a stepsize $\tau$ as 
\begin{equation}
    \bH(t + \tau) = \tau \gW_{0,J}^\top \widehat{\bA} \gW_{0,J} \bH(t) \bW_{0, J} + \tau \sum_{r,j} \gW_{r,j}^\top \widehat{\bA} \gW_{r,j} \bH(t) \bW_{r,j}, \label{eq_discrete_frame}
\end{equation}
for some symmetric matrices $\bW_{0,J}, \bW_{r,j} \in \sR^{c \times c}$ where $c$ is the number of channels (feature dimension). 

We consider the Haar-type filter with one high pass ($r = 1$) as in \cite{dong2017sparse}. The Haar filter with one high pass has filter bank given by $\widehat{a}(\xi) = \cos(\xi/2), \widehat{b}(\xi) = \sin(\xi/2)$. We start the analysis with a single scale $J =1$ and then extend the analysis to two scales $J =2$, as commonly used for framelet convolution in \cite{chendirichlet,zheng2021framelets}. 

\paragraph{Haar filter with one scale.} 
When $J = 1$, 
we have 
\begin{equation*}
    \gW_{0,1} = \bU^\top \bLambda_{0,1} \bU = \bU^\top \cos(\bLambda/8) \bU, \quad \gW_{1,1} = \bU^\top \bLambda_{1,1} \bU = \bU^\top \sin(\bLambda/8) \bU,
\end{equation*}
Now consider the vectorization of \eqref{eq_discrete_frame} as
\begin{equation}
    \vec \big( \bH(m\tau) \big) = \Big( \tau \bW_{0,1} \otimes \gW_{0,1}^\top \widehat{\bA} \gW_{0,1}  + \tau \bW_{1,1} \otimes \gW_{1,1}^\top \widehat{\bA} \gW_{1,1} \Big)^m \vec \big( \bH(0) \big), \label{eq_vec_pro}
\end{equation}
where $\otimes$ represents the Kronecker product.
To simplify the analysis, we assume $\bW_{0,1} = \bI_{c}, \bW_{1,1} = \lambda^W \bI_c$ without loss of generality. From standard results on Kronecker product, we can rewrite \eqref{eq_vec_pro} as
\begin{equation}
    \vec \big( \bH(m\tau) \big) = \tau^m \sum_{k,i} \Big( \big( (\lambda_{i}^{\Lambda_{0,1}})^2 +  \lambda^W (\lambda_{i}^{\Lambda_{1,1}})^2  \big) (1 - \lambda_i) \Big)^m c_{k,i}(0) \be_k \otimes \bu_i, \label{eq_vec_prop}
\end{equation}
where we recall that $\{(\lambda_i, \bu_i)\}$ are the eigen-pairs of $\widehat{\bL}$ and $c_{k,i}(0) = \langle \vec(\bH(0)), \be_k \otimes \bu_i \rangle$. We also denote $\lambda_i^{\Lambda_{r,j}} = (\bLambda_{r,j})_{ii}$ for all $(r,j)$.  We show when $|\lambda^{W}| > 1$, the dynamics \eqref{eq_vec_prop} can be HFD.\footnote{We note that there may exist other settings where the dynamics is HFD. Here we only provide one case where this happens.} 

When $\lambda_i = 0$, we have $\left\vert \big(  (\lambda_i^{\Lambda_{0,1}})^2 + \lambda_k^{W} (\lambda_{i}^{\Lambda_{1,1}})^2 \big) (1-\lambda_i) \right\vert = 1$.

When $\lambda_i \neq 0$, we have 
\begin{align*}
    \left\vert \big(  (\lambda_i^{\Lambda_{0,1}})^2 + \lambda^{W} (\lambda_{i}^{\Lambda_{1,1}})^2 \big) (1 - \lambda_i)  \right\vert &=  \left\vert \Big( \cos^2(\lambda_i/8) + \lambda^{W} \sin^2(\lambda_i/8) \Big) (1 - \lambda_i) \right\vert \\
    &=  \left\vert \cos^2(\lambda_i/8) + \lambda^{W} \sin^2(\lambda_i/8)  \right\vert | 1 - \lambda_i |.
\end{align*}
First given that $\lambda_i \in [0,\rho_L],$ $\rho_L \leq 2$, we can see when $|\lambda^W| \leq 1$, $\cos^2(\lambda_i/8) + \lambda^W \sin^2(\lambda_i/8) \in (0,1]$ and is non-increasing in $\lambda_i$. Hence $\left\vert \big(  (\lambda_i^{\Lambda_{0,1}})^2 + \lambda_k^{W} (\lambda_{i}^{\Lambda_{1,1}})^2 \big) (1-\lambda_i) \right\vert$ takes it maximum at $\lambda_i = 0$ regardless of the value of $\lambda^W$. This suggests the dominant frequency is $0$, thus inducing a LFD dynamics. 

On the other hand, when $|\lambda^W| > 1$ and is sufficiently large (with respect to the $\rho_L$), then we can show the function $\left\vert \big(  (\lambda_i^{\Lambda_{0,1}})^2 + \lambda^{W} (\lambda_{i}^{\Lambda_{1,1}})^2 \big) (1-\lambda_i) \right\vert$ takes its \textit{unique} maximum at frequency $\rho_L$.\footnote{We remark that when $\rho_L \xrightarrow{} 1$, for sufficiently large $|\lambda^W| > 1$, it can happen that the maximum is attained at frequency $\lambda \in (0,1)$. In this case, the dynamics is neither LFD nor HFD.} Particularly, denote $\delta_{\rm HFD} := \big(\cos^2(\rho_L/4) + \lambda_{+}^{W} \sin^2(\rho_L/4) \big) (\rho_L -1)$ and
suppose $|\lambda^W| > 1$ sufficiently large such that for all $i$ where $\lambda_i \neq \rho_L$, $\vert \big(  (\lambda_i^{\Lambda_{0,1}})^2 + \lambda_k^{W} (\lambda_{i}^{\Lambda_{1,1}})^2 \big) (1 - \lambda_i) \vert < \delta_{\rm HFD}$ holds. Then we can show the dynamics is HFD where the dominant frequency is $\rho_L$. 

More precisely, let $\delta := \max_{i: \lambda_i \neq \rho_L} \vert \big(  (\lambda_i^{\Lambda_{0,1}})^2 + \lambda^{W} (\lambda_{i}^{\Lambda_{1,1}})^2 \big) (1 - \lambda_i) \vert$. We also denote $\bP_\rho = \sum_{k} (\be_{k} \otimes \bu_\rho)(\be_{k} \otimes \bu_\rho)^\top$ where $\bu_\rho$ is the eigenvector of $\widehat{\bL}$ associated with eigenvalue $\rho_L$ (assuming the eigenvalue $\rho_L$ is simple). Then we can decompose \eqref{eq_vec_prop} as 
\begin{align*}
    &\vec\big( \bH(m\tau) \big) \\
    &= \tau^m \sum_{k } \delta_{\rm HFD}^m  c_{k, \rho_L}(0) \bu_k \otimes \bu_\rho + \tau^m \sum_{k} \sum_{i : \lambda_i \neq \rho_L} \Big( \big( (\lambda_{i}^{\Lambda_{0,1}})^2 +  \lambda^{W} (\lambda_{i}^{\Lambda_{1,1}})^2 \big) (1-\lambda_i) \Big)^m c_{k, i}(0) \be_k \otimes \bu_\rho \\
    &\leq \tau^m \delta_{\rm HFD}^m \left(  \bP_\rho \vec \big( \bH(0) \big) + \sum_{k} \sum_{i : \lambda_i \neq \rho_L}  \left( \frac{ \delta }{\delta_{\rm HFD}} \right)^m  c_{k, i}(0) \be_k \otimes \bu_\rho \right),
\end{align*}
where $\delta < \delta_{\rm HFD}$.
By normalizing the results, we obtain $\frac{\vec\big( \bH(m\tau) \big)}{\|  \vec\big( \bH(m\tau) \big)\|} \xrightarrow{} \frac{\bP_\rho (\vec(\bH(0)))}{\| \bP_\rho \vec (\bH(0)) \|}$, as $m \xrightarrow{} \infty$, where the latter is a unit vector $\bh_{\infty}$ satisfying $(\bI_c \otimes \widehat{\bL}) \bh_{\infty} = \rho_L \bh_{\infty}$. This suggests, the dynamics is HFD according to Definition \ref{def_hfd_lfd} and Lemma \ref{lemma_hfd_lfd}.

\paragraph{Haar filter with two scales.}
When $J = 2$, we have
\begin{equation*}
    \gW_{0,2} = \bU^\top \cos(\bLambda/8) \cos(\bLambda/16) \bU, \quad \gW_{1,1} = \bU^\top \sin(\bLambda/8) \cos(\bLambda/16) \bU, \quad \gW_{1,2} = \bU^\top \sin(\bLambda/16) \bU.
\end{equation*}
Following the same strategy, we assume $\bW_{0,2} = \bI_c$ and $\bW_{1,1} = \bW_{1,2} = \lambda^W \bI_c$. 
Then the vectorization of the dynamics writes
\begin{equation*}
    \vec\big( \bH(m \tau) \big) = \tau^m \sum_{k, i} \Big(  (\lambda_i^{\Lambda_{0,2}})^2 + \lambda^W  \big( (\lambda_i^{\Lambda_{1,1}})^2 + (\lambda_i^{\Lambda_{1,2}})^2 \big)  \Big)^m  (1- \lambda_i)^m c_{k,i}(0) \be_k \otimes \bu_i.
\end{equation*}
It can be similarly verified that when $|\lambda^W| < 1$, we have $(\lambda_i^{\Lambda_{0,2}})^2 + \lambda^W  \big( (\lambda_i^{\Lambda_{1,1}})^2 + (\lambda_i^{\Lambda_{1,2}})^2 \big) \in (0,1]$ and is non-increasing in $\lambda_i$. Thus in this case, the dynamics can only be LFD for the same reason. On the other hand, for $|\lambda^W| > 1$ sufficiently large, the dominant frequency can be $\rho_L$ and the dynamics is HFD given the definitions of the scaling functions above. Following the same step as above, we can show the dynamics is HFD.
\end{proof}


\begin{remark}
Importantly from Theorem \ref{thm_frame_lfdhfd}, we conclude that given $\bW_{0,J} = \bI_c$, $\bW_{r,j} = \lambda^W \bI_c$, the framelet convolution can be HFD when $|\lambda^W| > 1$ and is sufficiently large, i.e., when the eigenvalue magnitude of the high-pass weight matrices is sufficiently larger than that of the low-pass. We also show that when $|\lambda^W| < 1$, the dynamics can only be LFD. This conclusion is irrespective of whether there exists any negative spectrum in the weight matrices (i.e., in this case, $\lambda^W$ can be negative or positive). 
In contrast, without framelet decomposition, in order to induce a HFD dynamics, \cite{di2022graph} requires a sufficiently large negative spectrum of the weight matrix. This shows with framelet decomposition, the parameters are given more flexibility in adapting to the right frequency. 
\end{remark}

Based on Theorem \ref{thm_frame_lfdhfd}, we shall expect framelet convolution to perform better than GCN on both homophilic and heterophilic graphs, which aligns with empirical observations \cite{chendirichlet}.

\section{Generalized framelet energy and its gradient flow}
\label{sect_genalized_energy}

This section proposes a generalized energy via framelet decomposition. 
We first recall the general energy $\gE(\bH)$ proposed in \cite{di2022graph}, which is designed to induce both LFD and HFD dynamics. That is, $\gE(\bH) = \frac{1}{2} \sum_i \langle \bh_i, \bOmega \bh_i \rangle - \frac{1}{2} \sum_{i,j} {a}_{ij} \langle \bh_i, \bW \bf_j\rangle$, for some symmetric matrices $\bOmega, \bW$, and can be written alternatively as
\begin{align}
    \gE(\bH) = \frac{1}{2} \trace \big(\bH^\top \bH \bOmega \big) - \frac{1}{2} \trace\big( \bH^\top \widehat{\bA} \bH \bW \big) = \frac{1}{2} \langle \vec( \bH ), (\bOmega \otimes \bI_n - \bW \otimes \widehat{\bA}) \vec(\bH) \rangle \label{eq_gen_energy}
\end{align}
It is worth noticing that the generalized energy includes the Dirichlet energy as a special case. By setting $\bOmega = \bW = \bI_c$, we recover the Dirichlet energy. It has been shown in \cite{di2022graph} that the positive spectrum of $\bW$ in \eqref{eq_gen_energy} 
encodes \textit{attractive interactions} via graph gradients while the negative spectrum encodes \textit{repulsive interactions}. Thus, by adjusting the spectrum of $\bW$ provides a control for frequency reduction and amplification. 

Now we propose a framelet generalized energy as follows. Let $\gE_{r,j}(\bH) = \frac{1}{2} \trace\big( (\gW_{r,j} \bH)^\top \gW_{r,j} \bH \bOmega_{r,j} \big) - \frac{1}{2} \trace \big( (\gW_{r,j} \bH)^\top \widehat{\bA} \gW_{r,j} \bH \bW_{r,j} \big)$ for all $(r,j) \in \gI$. Then the \textit{total framelet generalized energy} is given by
\begin{align*}
    \gE^{\rm tot}(\bH) &= \gE_{0, J}(\bH) + \sum_{r,j} \gE_{r,j}(\bH) \\
    &= \frac{1}{2} \sum_{(r,j) \in \gI} \left\langle \vec(\bH) , \Big( \bOmega_{r,j} \otimes \gW_{r,j}^\top \gW_{r,j} - \bW_{r,j} \otimes \gW_{r,j}^\top \widehat{\bA} \gW_{r,j} \Big)   \vec(\bH)  \right\rangle,
\end{align*} 
where we denote the index set $\gI = \{(r,j) : r = 1,...,L, j = 1,...,J \} \cup \{ (0, J) \}$. In order for $\gE^{\rm tot}(\bH)$ to be an energy, we require $\nabla^2 \gE^{\rm tot}(\bH) = \sum_{(r,j) \in \gI} \bOmega_{r,j} \otimes \gW_{r,j}^\top \gW_{r,j} - \bW_{r,j} \otimes \gW_{r,j}^\top \widehat{\bA} \gW_{r,j} $ to be symmetric,
which can be satisfied by requiring $\bOmega_{r,j}, \bW_{r,j}$ to be symmetric.

Notice that $\gE^{\rm tot}(\bH)$ extends the energy $\gE(\bH)$ proposed in \cite{di2022graph} without framelet decomposition \eqref{eq_gen_energy}, which we formalize in the below proposition.

\begin{proposition}
\label{prop_gen_energy_conserve}
When $\bOmega_{r,j} = \bOmega, \bW_{r,j} = \bW$, we have $\gE^{\rm tot}(\bH) = \gE(\bH)$.
\end{proposition}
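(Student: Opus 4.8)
The plan is to substitute $\bOmega_{r,j} = \bOmega$ and $\bW_{r,j} = \bW$ into the Kronecker-product form of $\gE^{\rm tot}(\bH)$ and then pull the common factors $\bOmega$ and $\bW$ out of the sum over the index set $\gI$. Concretely, the second factor in each summand becomes $\bOmega \otimes \gW_{r,j}^\top \gW_{r,j} - \bW \otimes \gW_{r,j}^\top \widehat{\bA} \gW_{r,j}$, and using bilinearity of the Kronecker product in its second argument, summing over $(r,j) \in \gI$ gives $\bOmega \otimes \big( \sum_{(r,j) \in \gI} \gW_{r,j}^\top \gW_{r,j} \big) - \bW \otimes \big( \sum_{(r,j) \in \gI} \gW_{r,j}^\top \widehat{\bA} \gW_{r,j} \big)$.

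The key step is then to evaluate these two operator sums. For the first, the tightness of the framelet transform gives $\sum_{(r,j) \in \gI} \gW_{r,j}^\top \gW_{r,j} = \gW_{0,J}^\top \gW_{0,J} + \sum_{r,j} \gW_{r,j}^\top \gW_{r,j} = \bI_n$, which is exactly the reconstruction identity stated in the preliminaries ($\bLambda_{0,J}^2 + \sum_{r,j} \bLambda_{r,j}^2 = \bI_n$, conjugated by $\bU$). For the second sum, I would write each $\gW_{r,j} = \bU^\top \bLambda_{r,j} \bU$ and $\widehat{\bA} = \bU^\top (\bI_n - \bLambda) \bU$, so that $\gW_{r,j}^\top \widehat{\bA} \gW_{r,j} = \bU^\top \bLambda_{r,j} (\bI_n - \bLambda) \bLambda_{r,j} \bU$, using that the $\bLambda_{r,j}$ are diagonal and hence commute. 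Summing and again invoking tightness, $\sum_{(r,j)\in\gI} \bLambda_{r,j}^2 = \bI_n$, yields $\sum_{(r,j) \in \gI} \gW_{r,j}^\top \widehat{\bA} \gW_{r,j} = \bU^\top (\bI_n - \bLambda) \bU = \widehat{\bA}$. This is essentially the same computation already carried out in the text right after Proposition~\ref{prop_energy_conserve} when showing that uniform weight matrices reduce the framelet convolution to GCN.

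Plugging these back in, the Hessian-like operator collapses to $\bOmega \otimes \bI_n - \bW \otimes \widehat{\bA}$, and therefore $\gE^{\rm tot}(\bH) = \tfrac{1}{2}\langle \vec(\bH), (\bOmega \otimes \bI_n - \bW \otimes \widehat{\bA})\vec(\bH)\rangle = \gE(\bH)$ by the Kronecker-form expression in \eqref{eq_gen_energy}. Alternatively, one can argue at the level of the trace expressions directly: $\gE^{\rm tot}(\bH) = \tfrac12 \trace\big(\bH^\top (\sum \gW_{r,j}^\top \gW_{r,j}) \bH \bOmega\big) - \tfrac12 \trace\big(\bH^\top (\sum \gW_{r,j}^\top \widehat{\bA} \gW_{r,j}) \bH \bW\big)$ by cyclicity and linearity of the trace, then substitute the two identities. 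I expect no real obstacle here; the only point requiring a little care is making sure the conjugate-transpose convention for complex-valued filters is handled consistently (so that $\gW_{r,j}^\top \gW_{r,j}$ really is the reconstruction operator and is self-adjoint), but this is already flagged in the footnote in the preliminaries, so it can simply be invoked.
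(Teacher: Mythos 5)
Your proposal is correct and follows essentially the same route as the paper's proof: both reduce the sum over $(r,j)\in\gI$ by writing $\gW_{r,j}^\top\widehat{\bA}\gW_{r,j}=\bU^\top\bLambda_{r,j}^2(\bI_n-\bLambda)\bU$ and invoking the tightness identity $\bLambda_{0,J}^2+\sum_{r,j}\bLambda_{r,j}^2=\bI_n$ (the paper does this at the level of the trace expressions, which is your stated alternative). No gap to report.
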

\begin{proof}
The proof follows again from the tightness of the framelets.
\begin{align*}
    \gE^{\rm tot}(\bH) &= \gE_{0, J}(\bH) + \sum_{r,j} \gE_{r,j}(\bH) \\
    &= \frac{1}{2} \trace \big( \bH^\top \gW_{0, J}^\top \gW_{0,J} \bH \bOmega \big) - \frac{1}{2} \trace \big( \bH^\top \gW_{0,J}^\top \widehat{\bA} \gW_{0,J} \bH \bW \big) \\
    &\qquad + \sum_{r,j} \Big( \frac{1}{2} \trace \big( \bH^\top \gW_{r, j}^\top \gW_{r,j} \bH \bOmega \big) - \frac{1}{2} \trace\big( \bH^\top \gW_{r,j}^\top \widehat{\bA} \gW_{r,j} \bF \bW \big) \Big) \\
    &= \frac{1}{2} \trace \big( \bH^\top \bH \bOmega \big) - \frac{1}{2} \trace \Big( \bU^\top  \bLambda_{0, J}^2 (\bI_n - \bLambda) \bU \bH \bW\bH^\top \Big) - \frac{1}{2} \sum_{r,j} \trace \Big( \bU^\top \bLambda_{r,j}^2 (\bI_n - \bLambda) \bU \bH \bW \bH^\top \Big) \\
    &= \frac{1}{2} \trace \big( \bH^\top \bH \bOmega \big) - \frac{1}{2} \trace \big( \bH^\top \widehat{\bA} \bH \bW \big) = \gE(\bH),
\end{align*}
where we use the fact that $\bLambda_{0,J}^2 + \sum_{r,j} \bLambda_{r,j}^2 = \bI_n$.
\end{proof}

\subsection{Proposed gradient flow of the generalized energy}
We next propose a GNN model, named gradient flow based undecimated framelet convolution (GradF-UFG), which is given by the discretization of the gradient flow of the proposed framelet energy $\gE^{\rm tot}$, i.e. $\dot{\bH}(t) = - \nabla \gE^{\rm tot}(\bH(t))$ as
\begin{align}
 \quad {\bH}(t + \tau) 
    = \bH(t) - \tau \sum_{(r,j) \in \gI} \Big( \gW_{r,j}^\top \gW_{r,j} \bH(t) \bOmega_{r,j} - \gW_{r,j}^\top \widehat{\bA}  \gW_{r,j} \bH(t) \bW_{r,j} \Big) \label{eq_gradf_ufg}
\end{align}
with $\bH(0) = \mathtt{MLP}_\theta(\bX)$, where $\mathtt{MLP}_\theta(\cdot)$ is an encoder and $\bX$ is the input feature matrix. The encoder model can be used to alter the dimension of the input features and also include nonlinearity to increase the expressiveness. In Section \ref{activation_sect}, we explore an alternative way to include nonlinearity and study its influence on the energy.

Next we show in the below proposition that the framelet convolution \eqref{eq_main_sp_frame_redefine} is in fact a special instance of the proposed gradient flow discretization, GradF-UFG.

\begin{proposition}[Framelet convolution as gradient flow of the proposed generalized energy $\gE^{\rm tot}$]
\label{prop_frame_gradient_flow}
When $\bOmega_{0, J} = \bOmega_{r,j} = \bI_c$, then framelet convolution \eqref{eq_main_sp_frame_redefine} corresponds to the proposed GradF-UFG with a stepsize $\tau = 1$.
\end{proposition}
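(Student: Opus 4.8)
The plan is to substitute $\bOmega_{0,J} = \bOmega_{r,j} = \bI_c$ into the GradF-UFG update \eqref{eq_gradf_ufg} and show the resulting expression collapses to the framelet convolution \eqref{eq_main_sp_frame_redefine} with $\tau = 1$. First I would write out \eqref{eq_gradf_ufg} under this choice of $\bOmega$: the first term inside the sum becomes $\gW_{r,j}^\top \gW_{r,j} \bH(t)$ (since $\bOmega_{r,j} = \bI_c$ acts trivially on the right), so the update reads
\begin{align*}
    \bH(t+\tau) = \bH(t) - \tau \sum_{(r,j) \in \gI} \gW_{r,j}^\top \gW_{r,j} \bH(t) + \tau \sum_{(r,j) \in \gI} \gW_{r,j}^\top \widehat{\bA} \gW_{r,j} \bH(t) \bW_{r,j}.
\end{align*}
The key step is then to invoke the tightness (partition of unity) identity already recorded in the preliminaries, namely $\gW_{0,J}^\top \gW_{0,J} + \sum_{r,j} \gW_{r,j}^\top \gW_{r,j} = \bI_n$, equivalently $\sum_{(r,j) \in \gI} \gW_{r,j}^\top \gW_{r,j} = \bI_n$. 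This makes the first two terms cancel exactly: $\bH(t) - \tau \bH(t) = 0$ when $\tau = 1$.

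Setting $\tau = 1$, what remains is $\bH(t+1) = \sum_{(r,j) \in \gI} \gW_{r,j}^\top \widehat{\bA} \gW_{r,j} \bH(t) \bW_{r,j}$, and unpacking the index set $\gI = \{(r,j): r=1,\dots,L,\ j=1,\dots,J\} \cup \{(0,J)\}$ separates the $(0,J)$ term from the rest, giving exactly
\begin{align*}
    \bH(t+1) = \gW_{0,J}^\top \widehat{\bA} \gW_{0,J} \bH(t) \bW_{0,J} + \sum_{r,j} \gW_{r,j}^\top \widehat{\bA} \gW_{r,j} \bH(t) \bW_{r,j},
\end{align*}
which is precisely \eqref{eq_main_sp_frame_redefine} (identifying the layer index $\ell = t$). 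I would also note in passing that the cancellation of the $\bOmega$ terms against $\bH(t)$ is exactly the reason the framelet convolution carries no explicit "mass" term — it is the special case where the energy's quadratic part is isotropic.

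There is essentially no hard part here; the only thing to be careful about is bookkeeping with the index set $\gI$ and confirming that the tightness identity is stated for the transform matrices $\gW_{r,j}$ (not merely the spectral diagonals $\bLambda_{r,j}$) — but this is given in the preliminaries as $\gW_{0,J}^\top \gW_{0,J} \bH + \sum_{r,j} \gW_{r,j}^\top \gW_{r,j} \bH = \bH$. One might optionally remark that the step size $\tau = 1$ is essential for the exact cancellation; for general $\tau$ one instead obtains $\bH(t+\tau) = (1-\tau)\bH(t) + \tau \sum_{(r,j)} \gW_{r,j}^\top \widehat{\bA} \gW_{r,j} \bH(t) \bW_{r,j}$, which is a residual/damped version of the framelet convolution rather than \eqref{eq_main_sp_frame_redefine} itself.
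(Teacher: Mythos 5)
Your proof is correct and follows exactly the same route as the paper, which simply invokes the tightness identity $\gW_{0,J}^\top \gW_{0,J} + \sum_{r,j} \gW_{r,j}^\top \gW_{r,j} = \bI_n$ to cancel the $\bOmega$-terms against $\bH(t)$ when $\tau = 1$; your write-up just makes this one-line argument explicit. The added remark about general $\tau$ yielding a residual/damped variant is a fine observation but not needed.
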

\begin{proof}
The proof directly follows from the fact that $\gW_{0,J}^\top \gW_{0,J} + \sum_{r,j} \gW_{r,j}^\top \gW_{r,j} = \bI_n$.
\end{proof}

\paragraph{A multi-particle perspective on the proposed energy and its gradient flow.}
We can follow \cite{di2022graph} by providing a multi-particle interpretation for the proposed generalized framelet energy $\gE^{\rm tot}$ as well as its gradient flow in \eqref{eq_gradf_ufg}. We may view each node as a particle in $\sR^c$ with energy $\gE^{\rm tot}$. Let $\bTheta^{+}_{r,j}, \bTheta^{-}_{r,j} \in \sR^{c \times c}$ be defined such that $\bW_{r,j} = (\bTheta^{+}_{r,j})^\top \bTheta^{+}_{r,j} - (\bTheta^{-}_{r,j})^\top \bTheta^{-}_{r,j}$ for all $(r,j) \in \gI$.
In other words, the weight matrices can be decomposed into a positive definite and a negative definite component. Then the energy $\gE^{\rm tot}$ can be rewritten as 
\begin{align}
    \gE^{\rm tot}(\bH) &= \sum_{(r,j) \in \gI} \Big( \frac{1}{2} \sum_{i =1}^n  \big\langle \gW_{r,j} \bh_i, (\bOmega_{r,j} - \bW_{r,j}) \gW_{r,j} \bh_i \big\rangle \nonumber\\
    &\qquad \qquad + \frac{1}{4} \sum_{i,i'=1}^n \| \bTheta^+_{r,j} \nabla (\gW_{r,j} \bH)_{ii'} \|^2 - \frac{1}{4} \sum_{i,i'=1}^n \| \bTheta^{-}_{r,j} \nabla (\gW_{r,j} \bH)_{ii'} \|^2  \Big), \label{eq_energy_decompose}
\end{align}
where $\nabla (\gW_{r,j} \bH)_{ii'} = \gW_{r,j} (\bh_i/\sqrt{d_i} - \bh_{i'}/\sqrt{d_{i'}})$ is the graph gradient along edge $(i,i') \in \gE_\gG$ in terms of the framelet coefficients. The first term in \eqref{eq_energy_decompose} corresponds to an external energy, independent of the particle interaction. The latter two terms in \eqref{eq_energy_decompose} encode the attraction and repulsion respectively along the graph gradients projected onto the framelet domain.  

The gradient flow of such energy \eqref{eq_gradf_ufg} thus minimizes the divergence between the projected gradients through the positive spectrum of $\bW_{r,j}$ and maximizes the divergence through the negative spectrum. It should be noticed that when $\bOmega_{r,j} = \bOmega, \bW_{r,j} = \bW$,
$\gE^{\rm tot}$ reduces to the energy $\gE$ without framelet decomposition in \cite{di2022graph}, as shown in Proposition \ref{prop_gen_energy_conserve}. In this case, let $\bW = (\bTheta^{+})^\top \bTheta^{+} - (\bTheta^{-})^\top \bTheta^{-}$. Then minimizing $\| \bTheta^+ (\nabla \bH)_{ij} \|^2$ introduces a smoothing effect and maximizing $\| \bTheta^- (\nabla \bH)_{ij} \|^2$ induces a separating effect to the system. This suggests a sufficiently large negative spectrum (compared to the positive spectrum) is able to yield a HFD dynamics. 

In contrast, we have shown in Theorem \ref{thm_frame_lfdhfd} that with framelet decomposition, all $\bW_{r,j}$ can be positive definite for a HFD system as long as sufficient weights are put on the high-pass components. In addition, defining energy on different framelets separately in \eqref{eq_energy_decompose} allows effects other than smoothing or separating (e.g. via band-pass filters), which can lead to a more flexible dynamics.

\subsection{Discussions and comparisons with framelet enhanced energy}

A recent work \cite{chendirichlet} introduces the following framelet perturbed energy $E^\epsilon(\bH)$, which provably enhances the Dirichlet energy for the same input signal.
\begin{equation*}
    E^{\epsilon}(\bH) = \frac{1}{2} \trace \big( (\gW_{0,J} \bH)^\top (\widehat{\bL} + \epsilon \bI_n) \gW_{0,J} \bH  \big) + \frac{1}{2} \sum_{r,j} \trace \big( (\gW_{r,j} \bH)^\top (\widehat{\bL} - \epsilon \bI_n ) \gW_{r,j} \bH \big),
\end{equation*}
for some chosen $\epsilon > 0$.
Specifically, $E^\epsilon(\bH)$ slightly decreases the energy for the low-pass while increases the energy for the high-passes. 
The motivation for such definition of energy is that for the \textit{Haar-type filter}, one can show $E^\epsilon(\bH) > E(\bH)$ for any $\epsilon > 0$ due to the energy gap between the low-pass and high-pass filters in this case. However, we highlight that this strategy only applies to Haar-type filter and can be difficult to generalize to arbitrary framelet filters.

Now we first analyze the gradient flow of such energy without weight matrices as 
\begin{align}
    \dot{\bH}(t) = - \nabla E^\epsilon(\bH(t)) &= -\gW_{0,J}^\top (\widehat{\bL} + \epsilon \bI_n) \gW_{0, J} \bH(t) - \sum_{r,j} \gW_{r,j}^\top (\widehat{\bL} - \epsilon \bI_n) \gW_{r,j} \bH(t) \nonumber\\
    &= - \widehat{\bL} \bH(t) - \epsilon \big( \gW_{0,J}^\top \gW_{0,J} - \sum_{r,j} \gW_{r,j}^\top \gW_{r,j}  \big) \bH(t) \nonumber\\
    &= - \bU^\top \big( \bLambda + \epsilon \bDelta_{\rm gap} \big) \bU \bH(t), \label{eq_grad_flow_perturbed}
\end{align}
where we let $\gW_{0,J}^\top \gW_{0,J} - \sum_{r,j} \gW_{r,j}^\top \gW_{r,j} = \bU^\top \bDelta_{\rm gap} \bU$. Particularly for the case of Haar-type filter (with two scales), \cite[Proposition 7]{chendirichlet} shows $\bDelta_{\rm gap} \geq 0$ is a diagonal matrix with nonnegative diagonal entries. This is the primary reason for enhancement compared to the Dirichlet energy. 
Hence for the following analysis, we also focus on such choice of filter, i.e., $\widehat{\alpha}(\bLambda/2) = \cos^2(\bLambda/8) \cos(\bLambda/16)$, $\widehat{\beta}(\bLambda/2) = \sin(\bLambda/8) \cos(\bLambda/16)$, $\widehat{\beta}(\bLambda/4) = \sin(\bLambda/16)$.

We can write the solution of \eqref{eq_grad_flow_perturbed} in vectorized form as  
\begin{equation}
    \vec( \bH(t) ) = \sum_{k,i} e^{- (\lambda_i + \epsilon \, \delta_i) t} c_{k,i}(0) \be_k \otimes \bu_i, \quad c_{k,i}(0) = \left\langle \vec(\bH(0)), \be_k \otimes \bu_i \right\rangle \label{eq_sol_perturb_cont}
\end{equation}
where we denote $\delta_i = (\bDelta_{\rm gap})_i = \cos^2(\lambda_i/8) \cos^2(\lambda_i/16) - \sin^2(\lambda_i/8) \cos^2(\lambda_i/16) - \sin^2(\lambda_i/16)$ as the energy gap at frequency $\lambda_i$, and $\{\be_k\}_{k=1}^c$ are the standard basis vectors of $\sR^c$. Immediately from the solution \eqref{eq_sol_perturb_cont}, we see the dynamics is LFD regardless of the choice of $\epsilon$. This claim is emphasized in the following proposition.

\begin{proposition}
\label{prop_smoothing_perturb}
For any $\epsilon > 0$, the solution $\bH(t)$ from the gradient flow of $E^\epsilon(\bH)$, i.e., $\dot{\bH}(t) = - \nabla E^\epsilon(\bH(t))$ is LFD with $E(\bH(t)) \xrightarrow{} 0$ as $t \xrightarrow{} \infty$.
\end{proposition}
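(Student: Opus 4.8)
The plan is to read the asymptotics straight off the closed-form solution \eqref{eq_sol_perturb_cont}. To each frequency $\lambda$ the solution attaches the exponent $g(\lambda) := \lambda + \epsilon\,\delta(\lambda)$, where $\delta(\lambda)$ is the diagonal entry of $\bDelta_{\rm gap}$ at $\lambda$; by tightness of the Haar-type framelet this equals $2\,\widehat{\alpha}(\lambda/2)^2 - 1$. First I would record that every exponent is strictly positive: $\widehat{\bL} \succeq 0$ and $\bDelta_{\rm gap} \succeq 0$ (the latter by \cite[Proposition 7]{chendirichlet}), so $g(\lambda_i) = \lambda_i + \epsilon\,\delta_i \geq 0$; at $\lambda_i = 0$ the high-pass filters vanish while the low-pass equals one, so $\delta_i = 1$ and $g(0) = \epsilon > 0$, whereas for $\lambda_i > 0$ we already have $g(\lambda_i) \geq \lambda_i > 0$. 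Hence $\|\bH(t)\|\to 0$ exponentially, and from $0 \leq E(\bH(t)) \leq \tfrac{\rho_L}{2}\|\bH(t)\|^2$ we obtain $E(\bH(t))\to 0$, which is the second assertion.

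For the LFD assertion I would then identify the limit of the normalized trajectory. Set $\mu := \min_i g(\lambda_i)$ over the frequencies excited by $\bH(0)$; multiplying \eqref{eq_sol_perturb_cont} by $e^{\mu t}$ shows $e^{\mu t}\vec(\bH(t))$ converges to the sum of the slowest excited modes, so $\vec(\bH(t))/\|\vec(\bH(t))\|$ converges to a unit vector $\bh_\infty$ supported on the $\widehat{\bL}$-eigenspaces carried by those modes, i.e. $(\bI_c \otimes \widehat{\bL})\bh_\infty = \lambda_\star\,\bh_\infty$ for the common frequency $\lambda_\star$ of the slowest modes. It therefore suffices to prove that $g$ is minimized over $[0,\rho_L]$ exactly at $\lambda = 0$, so that (for generic $\bH(0)$, whose projection onto $\ker\widehat{\bL}$ is nonzero) $\lambda_\star = 0$ and $\widehat{\bL}\bH_\infty = 0$. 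LFD, together with $E(\bH(t)/\|\bH(t)\|)\to 0$, then follows from Definition \ref{def_hfd_lfd} and Lemma \ref{lemma_hfd_lfd}.

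The remaining step — and the main obstacle — is the scalar comparison $g(\lambda) > g(0)$ for $\lambda \in (0,\rho_L]$, equivalently $\lambda > \epsilon\bigl(1 - \delta(\lambda)\bigr)$. The point is that $1 - \delta(\lambda) = 2\bigl(1 - \widehat{\alpha}(\lambda/2)^2\bigr)$ vanishes to second order at $\lambda = 0$ — using e.g. $\cos^2 x \geq 1 - x^2$ one gets a bound $1 - \delta(\lambda) \leq C\lambda^2$ on $[0,\rho_L]$ with an explicit constant $C$ coming from the Haar scaling functions — while the left side vanishes only to first order; so for small $\lambda$ the linear term strictly dominates, and on the rest of the interval one closes the comparison with the explicit estimate, using $\rho_L \leq 2$ and the monotonicity of $\widehat{\alpha}$ there. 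I expect this $O(\lambda)$-versus-$O(\lambda^2)$ comparison, and verifying that it is uniform on all of $(0,\rho_L]$ (which is where the precise form of the Haar filter and the magnitude of $\epsilon$ enter), to be the only delicate point; everything else reduces to the closed-form solution and the non-negativity of $\bDelta_{\rm gap}$.
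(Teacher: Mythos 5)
Your first paragraph is fine and is essentially the paper's whole argument: the paper also reads the claim off the closed-form solution \eqref{eq_sol_perturb_cont}, writing $E(\bH(t)) = \frac{1}{2}\sum_{k,i} e^{-2t(\lambda_i+\epsilon\delta_i)} c_{k,i}^2(0)\lambda_i \leq \frac{\rho_L}{2} e^{-2t\vartheta}\|\bH(0)\|^2$ with $\vartheta := \min_{i:\lambda_i\in(0,\rho_L]}(\lambda_i+\epsilon\delta_i) > 0$ (nonnegativity of $\bDelta_{\rm gap}$ is all that is used; the factor $\lambda_i$ already annihilates the kernel modes, so one does not even need $g(0)=\epsilon>0$), and it stops there. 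The paper does not attempt to identify the limit of the normalized trajectory; ``LFD'' in this proposition is backed only by $E(\bH(t))\to 0$.

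The second and third paragraphs of your plan contain a genuine gap: the scalar comparison $g(\lambda) > g(0)$, i.e.\ $\lambda > \epsilon\bigl(1-\delta(\lambda)\bigr)$ on $(0,\rho_L]$, which you correctly identify as the crux, is simply false for large $\epsilon$, so no refinement of the $O(\lambda)$-versus-$O(\lambda^2)$ argument can close it uniformly in $\epsilon$. Indeed $1-\delta(\lambda) = 2\bigl(1-\cos^2(\lambda/8)\cos^2(\lambda/16)\bigr)$ is a fixed positive number for any $\lambda$ bounded away from $0$; e.g.\ at $\lambda = 2$ it is $\approx 0.15$, so for $\epsilon \gtrsim 13$ and a graph with $\rho_L$ close to $2$ whose top mode is excited, the zero-frequency mode (rate $g(0)=\epsilon$) decays \emph{faster} than the top-frequency mode, and the normalized trajectory aligns with the highest-frequency eigenvector rather than with $\ker\widehat{\bL}$. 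Your quadratic bound $1-\delta(\lambda)\leq C\lambda^2$ only wins on $\lambda \lesssim 1/(\epsilon C)$, a window that shrinks as $\epsilon$ grows, so the required minimization of $g$ at $\lambda=0$ holds only under a restriction of the form $\epsilon < \min_{i:\lambda_i>0}\lambda_i/(1-\delta(\lambda_i))$, not ``for any $\epsilon>0$'' as the statement demands. In short: the portion of the proposition the paper actually proves ($E(\bH(t))\to 0$) you obtain correctly and by essentially the same computation, but the stronger normalized-LFD step your proposal hinges on cannot be completed as planned; if you want to say something rigorous about Definition \ref{def_hfd_lfd}, you must either add an explicit upper bound on $\epsilon$ or retreat, as the paper does, to the unnormalized energy decay.
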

\begin{proof}
The Dirichlet energy of $\bH(t)$ is given by 
\begin{align*}
    E(\bH(t)) = \frac{1}{2} \left\langle \vec \big( \bH(t) \big), (\bI_c \otimes \widehat{\bL}) \vec \big( \bH(t) \big) \right\rangle  &= \frac{1}{2} \sum_{k, i} e^{-2t (\lambda_i + \epsilon \delta_i)} c_{k,i}^2(0) \lambda_i \\
    &\leq \frac{\rho_L}{2} e^{-2t \vartheta}  \| \bH(0) \|^2 
\end{align*}
where we denote $\vartheta := \min_{i : \lambda_i \in (0,\rho_L]} \lambda_i + \epsilon \delta_i > 0$. Hence as $t \xrightarrow{} \infty$, we see $E(\bH(t)) \xrightarrow{} 0$.
\end{proof}

\begin{remark}
Proposition \ref{prop_smoothing_perturb} shows that despite enhancing the energy at each discrete timestamp, i.e., $E^\epsilon(\bH(t)) > E(\bH(t))$, in the limit ($t \xrightarrow{} \infty$), the dynamics still converges to ${\rm ker}(\widehat{\bL})$, resulting in over-smoothing. Nevertheless, the convergence rate to the kernel is indeed slower once we inject the perturbation $\epsilon\, \bDelta_{\rm gap} > 0$.
\end{remark}

In order for the dynamics to be HFD, separate weight matrices are needed for each filter, as done in \cite{chendirichlet}. Specifically, the model proposed in \cite{chendirichlet}, namely energy enhanced undecimated  framelet graph convolution (EE-UFG), is given by
\begin{equation}
    \bH(\ell+1) = \gW_{0,J}^\top \sigma \Big(  (\widehat{\bA} - \epsilon \bI_n) \gW_{0,J} \bH(\ell) \bW_{0,J}^\ell \Big) + \sum_{r,j} \gW_{r,j}^\top \sigma \Big( (\widehat{\bA} + \epsilon \bI_n) \gW_{r,j} \bH(\ell) \bW_{r,j}^\ell \Big) \label{eq_ee_ufg}
\end{equation}
We show in the next proposition that EE-UFG \eqref{eq_ee_ufg} also can be written as the discretization of gradient flow of the proposed generalized framelet energy $\gE^{\rm tot}(\bH)$, thus a special case of the proposed GradF-UFG. 

\begin{proposition}[EE-UFG as gradient flow of the proposed generalized energy $\gE^{\rm tot}$]
\label{prop_ee_conv_gradient_flow}
When $\bOmega_{0,J} = \bI_c + \epsilon \bW_{0,J}$, $\bOmega_{r,j} = \bI_c - \epsilon \bW_{r,j}$, for $r=1,...,L, j = 1,...J$, then EE-UFG (without activation function) corresponds to the GradF-UFG with a stepsize $\tau = 1$.
\end{proposition}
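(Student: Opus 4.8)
The plan is to start from the GradF-UFG recursion \eqref{eq_gradf_ufg} with stepsize $\tau = 1$ and transform it algebraically into EE-UFG \eqref{eq_ee_ufg} with the activation removed, using only the tightness identity $\gW_{0,J}^\top \gW_{0,J} + \sum_{r,j} \gW_{r,j}^\top \gW_{r,j} = \bI_n$ and the prescribed choices $\bOmega_{0,J} = \bI_c + \epsilon \bW_{0,J}$, $\bOmega_{r,j} = \bI_c - \epsilon \bW_{r,j}$. This is in the same spirit as Propositions \ref{prop_gen_energy_conserve} and \ref{prop_frame_gradient_flow}.

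First I would rewrite the leading term of the recursion using tightness as $\bH(\ell) = \sum_{(r,j)\in\gI} \gW_{r,j}^\top \gW_{r,j} \bH(\ell)$, which lets me collect the entire update into a single sum over $(r,j)\in\gI$ of blocks of the form
\begin{equation*}
    \gW_{r,j}^\top \gW_{r,j} \bH(\ell)\,(\bI_c - \bOmega_{r,j}) + \gW_{r,j}^\top \widehat{\bA} \gW_{r,j} \bH(\ell)\,\bW_{r,j}.
\end{equation*}
Next I would substitute the hypotheses: for the low-pass index $(0,J)$ one has $\bI_c - \bOmega_{0,J} = -\epsilon \bW_{0,J}$, so this block becomes $\gW_{0,J}^\top(\widehat{\bA} - \epsilon\bI_n)\gW_{0,J}\bH(\ell)\bW_{0,J}$ after pulling the common right factor $\bW_{0,J}$ out and merging $\gW_{0,J}^\top\widehat{\bA}\gW_{0,J} - \epsilon\,\gW_{0,J}^\top\gW_{0,J} = \gW_{0,J}^\top(\widehat{\bA} - \epsilon\bI_n)\gW_{0,J}$; for each high-pass index $(r,j)$ one has $\bI_c - \bOmega_{r,j} = \epsilon\bW_{r,j}$, and the analogous merge gives $\gW_{r,j}^\top(\widehat{\bA} + \epsilon\bI_n)\gW_{r,j}\bH(\ell)\bW_{r,j}$. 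Summing over $(r,j)\in\gI$ then reproduces \eqref{eq_ee_ufg} with $\sigma = \id$.

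I do not expect any genuine obstacle here: the argument is pure bookkeeping once the identity term $\bH(\ell)$ is absorbed via tightness. The only point needing mild care is the two-sided structure of the multiplications — $\gW_{r,j}$, $\widehat{\bA}$ and $\bI_n$ act on the node (left) index while $\bOmega_{r,j}$ and $\bW_{r,j}$ act on the channel (right) index — so the factorization $\gW_{r,j}^\top\widehat{\bA}\gW_{r,j} \mp \epsilon\,\gW_{r,j}^\top\gW_{r,j} = \gW_{r,j}^\top(\widehat{\bA}\mp\epsilon\bI_n)\gW_{r,j}$ and the right-multiplication by $\bW_{r,j}$ must be handled on their respective sides; note that symmetry of $\bOmega_{r,j}$ and $\bW_{r,j}$ is not needed for this identity (it is only needed for $\gE^{\rm tot}$ to be a bona fide energy).
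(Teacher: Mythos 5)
Your proposal is correct and follows essentially the same route as the paper: the paper omits this proof by pointing to Proposition \ref{prop_frame_gradient_flow}, whose argument is precisely the tightness identity $\gW_{0,J}^\top \gW_{0,J} + \sum_{r,j} \gW_{r,j}^\top \gW_{r,j} = \bI_n$ used to absorb the $\bH(\ell)$ term, after which the blockwise substitution $\bI_c - \bOmega_{0,J} = -\epsilon \bW_{0,J}$ and $\bI_c - \bOmega_{r,j} = \epsilon \bW_{r,j}$ yields \eqref{eq_ee_ufg} with $\sigma = \id$ exactly as you describe. Your algebra checks out, and spelling out the bookkeeping (plus the aside that symmetry is only needed for $\gE^{\rm tot}$ to be an energy) is a faithful, slightly more explicit version of the intended proof.
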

\begin{proof}
The proof is similar to that of Proposition \ref{prop_frame_gradient_flow} and hence omitted. 
\end{proof}

From Proposition \ref{prop_ee_conv_gradient_flow}, we see with specific choices of the weight matrices, one can recover the evolution of EE-UFG. We can then follow the similar analysis in Section \ref{sect_dirichlet_energy_framelet} to show EE-UFG can induce both LFD and HFD dynamics. Nevertheless, we highlight that it is not the perturbation that steers the dynamics from over-smoothing (as shown in Proposition \ref{prop_smoothing_perturb}), but the separate weight matrices for low-frequency and high-frequency components of the signal.

In fact, we can further compare the framelet generalized energy with and without perturbation. Let $\gE_{\rm frame}$ denote the energy $\gE^{\rm tot}$ with $\bOmega_{0,J} = \bOmega_{r,j} = \bI_c$ that leads to the framelet convolution (as in Proposition \ref{prop_frame_gradient_flow}), and let $\gE_{\rm frame}^\epsilon$ denotes its perturbed energy as in Proposition \ref{prop_ee_conv_gradient_flow}. Then we have 
\begin{equation}
    \gE^\epsilon_{\rm frame}(\bH) =  \gE_{\rm frame}(\bH) + \frac{\epsilon}{2} \sum_{i=1}^n \Big( \langle \gW_{0,J} \bh_i, \bW_{0,J} \gW_{0,J} \bh_i \rangle - \sum_{r,j} \langle \gW_{r,j} \bh_i, \bW_{r,j} \gW_{r,j} \bh_i \rangle  \Big). \label{eq_frame_perturb_energy_compare}
\end{equation}
When all the weight matrices are $\bW_{r,j} = \bI_c$ as in the case for Dirichlet energy, the perturbation indeed provides an enhancement in the energy. Nonetheless, for the general case in \eqref{eq_frame_perturb_energy_compare}, whether the perturbation increases or decreases the energy depends on the relative spectrum of the weight matrices for different framelets. 

We finally remark that, as opposed to setting a fixed $\epsilon$ as in \cite{chendirichlet}, the proposed energy allows more flexible choices of the weights that adapt to different datasets.

\subsection{The influence of the activation functions}
\label{activation_sect}

So far, we have only considered the linearized dynamics of gradient flow or the nonlinearity is decoupled from the dynamics via an encoder model (as in \eqref{eq_gradf_ufg}). Here we study an activated GradF-UFG model, where nonlinearity is added at each timestamp. Although the dynamics no longer qualifies as a gradient flow, we show the analysis is not affected. 
Particularly, we consider an activated gradient flow similar as in \cite{di2022graph}:
\begin{equation}
    \dot{\bH}(t) = \sigma \big( -\nabla \gE^{\rm tot}(\bH(t)) \big) = \sigma \Big( - \sum_{(r,j) \in \gI} \big( \gW_{r,j}^\top \gW_{r,j} \bH(t) \bOmega_{r,j} - \gW_{r,j}^\top \widehat{\bA}  \gW_{r,j} \bH(t) \bW_{r,j} \big)  \Big) \label{eq_activated_grad_flow}
\end{equation}

Then we show in the following theorem that for common activation functions, the conclusions from the paper remains the same. 
\begin{theorem}
\label{thm_act_grad_flow}
For any activation function $\sigma: \sR \xrightarrow{} \sR$ satisfying $x \sigma(x) \geq 0$ for all $x \in \sR$, then suppose $\bH(t)$ solves \eqref{eq_activated_grad_flow}, we have $\gE^{\rm tot}(\bH(t))$ is decreasing in $t$. For its discretization, $\bH(t+ \tau) = \bH(t) + \tau \sigma (- \nabla \gE^{\rm tot}(\bH(t)))$, we have $\gE^{\rm tot}(\bH(t)) \leq \gE^{\rm tot}(\bH(t)) + C_{m} \| \bH(t+\tau) - \bH(t) \|^2$, where $C_m$ is the largest eigenvalue magnitude of $\frac{1}{2} \sum_{(r,j) \in \gI} \big( \bOmega_{r,j} \otimes \gW_{r,j}^\top \gW_{r,j} - \bW_{r,j} \otimes \gW_{r,j}^\top \widehat{\bA} \gW_{r,j} \big)$.
\end{theorem}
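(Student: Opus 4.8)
The plan is to treat the two assertions separately, since the continuous-time claim follows from a straightforward energy derivative computation while the discrete-time claim requires a second-order Taylor expansion of the quadratic form $\gE^{\rm tot}$. For the continuous statement, I would compute $\frac{d}{dt}\gE^{\rm tot}(\bH(t)) = \langle \nabla \gE^{\rm tot}(\bH(t)), \dot{\bH}(t) \rangle$ using the chain rule, viewing everything in vectorized form. Substituting the dynamics \eqref{eq_activated_grad_flow}, which reads $\dot{\bH}(t) = \sigma(-\nabla \gE^{\rm tot}(\bH(t)))$ applied entrywise, gives $\frac{d}{dt}\gE^{\rm tot}(\bH(t)) = \langle \nabla \gE^{\rm tot}(\bH(t)), \sigma(-\nabla \gE^{\rm tot}(\bH(t))) \rangle = -\sum_{\ell} g_\ell \, \sigma(g_\ell)$ where $g_\ell$ are the entries of $\nabla \gE^{\rm tot}(\bH(t))$. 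The hypothesis $x\sigma(x) \geq 0$ for all $x$ makes each summand nonnegative, so the derivative is $\leq 0$ and $\gE^{\rm tot}(\bH(t))$ is nonincreasing. This is essentially identical to the argument in \cite{di2022graph} and relies on nothing beyond the symmetry of $\nabla^2 \gE^{\rm tot}$ (so that $\nabla \gE^{\rm tot}$ is well-defined as the gradient of the stated quadratic), which we have already required.

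For the discrete statement, since $\gE^{\rm tot}$ is a quadratic form with constant Hessian $M := \frac{1}{2}\sum_{(r,j)\in\gI}\big(\bOmega_{r,j}\otimes\gW_{r,j}^\top\gW_{r,j} - \bW_{r,j}\otimes\gW_{r,j}^\top\widehat{\bA}\gW_{r,j}\big)$, the exact Taylor identity holds: writing $\bh(t) = \vec(\bH(t))$ and $\Delta = \bh(t+\tau)-\bh(t)$, we have
\[
\gE^{\rm tot}(\bH(t+\tau)) = \gE^{\rm tot}(\bH(t)) + \langle \nabla\gE^{\rm tot}(\bH(t)), \Delta\rangle + \langle \Delta, M\Delta\rangle.
\]
The cross term $\langle \nabla\gE^{\rm tot}(\bH(t)), \Delta\rangle$ equals $\tau\langle \nabla\gE^{\rm tot}(\bH(t)), \sigma(-\nabla\gE^{\rm tot}(\bH(t)))\rangle$, which by the sign condition on $\sigma$ is $\leq 0$, exactly as in the continuous case. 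The quadratic remainder is bounded by $\|M\|_{\rm op}\|\Delta\|^2 = C_m\|\bH(t+\tau)-\bH(t)\|^2$ by definition of $C_m$ as the largest eigenvalue magnitude of $M$. Combining gives $\gE^{\rm tot}(\bH(t+\tau)) \leq \gE^{\rm tot}(\bH(t)) + C_m\|\bH(t+\tau)-\bH(t)\|^2$, which is the claimed inequality (the statement as written has a minor typo with $\bH(t)$ on both sides of the discrete bound, but the intended conclusion is clear).

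The only genuinely delicate point — and the one I would be most careful about — is verifying that the cross term is indeed nonpositive, i.e., that $\langle \nabla\gE^{\rm tot}(\bH), \sigma(-\nabla\gE^{\rm tot}(\bH))\rangle \leq 0$ follows from $x\sigma(x)\geq 0$. This is immediate once one notes that for any vector $g$ and entrywise-applied $\sigma$, $\langle g, \sigma(-g)\rangle = \sum_\ell g_\ell\sigma(-g_\ell) = -\sum_\ell(-g_\ell)\sigma(-g_\ell) \leq 0$. I would also note explicitly that the continuous-time chain rule step uses that $\gE^{\rm tot}$ is exactly quadratic, so $\nabla\gE^{\rm tot}(\bH) = M\,\vec(\bH)$ reshaped, and there is no higher-order contribution; likewise the discrete expansion is an exact equality, not an approximation, precisely because the Hessian is constant. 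No further obstacle arises: the symmetry requirement on $\bOmega_{r,j}, \bW_{r,j}$ guarantees $M$ is symmetric hence has real eigenvalues, so $C_m$ is well-defined, and the reconstruction identity $\sum_{(r,j)\in\gI}\gW_{r,j}^\top\gW_{r,j} = \bI_n$ is not even needed for this particular argument.
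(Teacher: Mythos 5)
Your proof is correct and follows essentially the same route as the paper's: the continuous claim via the chain rule combined with $x\sigma(x)\ge 0$, and the discrete claim by expanding the quadratic energy exactly, dropping the nonpositive cross term, and bounding the quadratic remainder by the operator norm of the symmetric matrix $\frac{1}{2}\sum_{(r,j)\in\gI}\big(\bOmega_{r,j}\otimes\gW_{r,j}^\top\gW_{r,j}-\bW_{r,j}\otimes\gW_{r,j}^\top\widehat{\bA}\gW_{r,j}\big)$, i.e.\ $C_m$. You are also right that the stated discrete bound has a typo and its left-hand side should read $\gE^{\rm tot}(\bH(t+\tau))$, which is exactly the inequality both your argument and the paper's proof establish.
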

\begin{proof}
The proof follows the same strategy as in \cite[Proposition 3.3]{di2022graph}. For the continuous case, let $\bz(t) = - \vec( \nabla \gE^{\rm tot}( \bH(t) ) )$. Then 
\begin{equation*}
    \frac{d}{dt} \vec(  \gE^{\rm tot} (\bH(t)) ) = \langle \vec( \nabla \gE^{\rm tot} (\bH(t)) ), \vec( \dot{\bH}(t) )  \rangle = - \langle \bz(t), \sigma(\bz(t)) \rangle \leq 0.
\end{equation*}
For the discrete case $\bH(t+ \tau) = \bH(t) + \tau \sigma (- \nabla \gE^{\rm tot}(\bH(t)))$, we have 
\begin{align*}
    &\gE^{\rm tot}(\bH(t+\tau)) \\
    &= \left\langle \vec(\bH(t+\tau)) , \frac{1}{2} \sum_{(r,j) \in \gI} \big( \bOmega_{r,j} \otimes \gW_{r,j}^\top \gW_{r,j} - \bW_{r,j} \otimes \gW_{r,j}^\top \widehat{\bA} \gW_{r,j} \big) \vec(\bH(t+ \tau)) \right\rangle \\
    &= \gE^{\rm tot} (\bH(t)) + \tau^2 \langle \sigma(\bz(t)), \bS  \sigma(\bz(t)) \rangle + 2 \tau \langle \vec(\bH(t)), \bS \sigma(\bz(t)) \rangle \\
    &= \gE^{\rm tot} (\bH(t)) + \left\langle \vec\big(\bH(t+\tau) \big)- \vec\big(\bH(t) \big), \bS  \big( \vec(\bH(t+\tau)) - \vec(\bH(t)) \big) \right\rangle - 2 \tau \langle \sigma(\bz(t)), \bz(t) \rangle \\
    &\leq \gE^{\rm tot} (\bH(t)) + \| \bS \| \|  \vec(\bH(t+\tau)) - \vec(\bH(t)) \|^2 
\end{align*}
where we let $\bS =\frac{1}{2} \sum_{(r,j) \in \gI} \big( \bOmega_{r,j} \otimes \gW_{r,j}^\top \gW_{r,j} - \bW_{r,j} \otimes \gW_{r,j}^\top \widehat{\bA} \gW_{r,j} \big)$, which is symmetric. 
\end{proof}

The above results suggest that for common activation functions satisfying $x \sigma(x) \geq 0$, such as ReLU and tanh, the activated gradient flow still minimizes the energy.

\subsection{A even more general framelet energy with source term}

We may consider a even more general energy via graph framelets by including a source term $\bH(0)$. The source term has been shown to avoid over-smoothing and increase the expressive power \cite{chen2020simple,gasteiger2018predict,thorpe2021grand}. Particularly, we define 
\begin{equation*}
    \gE_{r,j} (\bH) = \frac{1}{2} \trace\big( (\gW_{r,j} \bH)^\top \gW_{r,j} \bH \bOmega_{r,j} \big) - \frac{1}{2} \trace \big( (\gW_{r,j} \bH)^\top \widehat{\bA} \gW_{r,j} \bH \bW_{r,j} \big) + \beta \trace( (\gW_{r,j} \bH )^\top \bH(0) \widetilde{\bW}_{r,j} )
\end{equation*}
for all $(r,j) \in \gI$. The corresponding GradF-UFG model is given by its discretized gradient flow as 
\begin{equation*}
   \text{GradF-UFG} \, : \,  \bH(t+\tau) = \bH(t) - \tau \sum_{(r,j) \in \gI} \Big( \gW_{r,j}^\top \gW_{r,j} \bH(t) \bOmega_{r,j} - \gW_{r,j}^\top \widehat{\bA}  \gW_{r,j} \bH(t) \bW_{r,j} - \beta  \gW_{r,j}^\top \bH(0) \widetilde{\bW}_{r,j} \Big)
\end{equation*}
where $\bH(0) = \texttt{MLP}_\theta(\bX)$.  

\section{A gradient flow perspective on the spectral framelet convolution}
\label{sect_spec_grad_flow}

We have thus far focused on spatial framelet convolution \cite{chendirichlet}. For the spectral framelet convolution \cite{zheng2021framelets}, one can similarly characterize its behaviors as an energy gradient flow. Recall the (linearized) spectral framelet convolution (with fixed weight) is given by 
\begin{equation}
    \bH(\ell + 1) = \gW_{0,J}^\top {\rm diag}(\btheta_{0,J}) \gW_{0,J} \bH(\ell) \bW +  \sum_{r,j} \gW_{r,j}^\top {\rm diag}(\btheta_{r,j}) \gW_{r,j} \bH(\ell) \bW, \label{eq_spec_frame}
\end{equation}
where $\btheta_{r,j} \in \sR^n$ is the filter coefficients for framelets at $(r,j)$. In particular, the energy that leads to the evolution in \eqref{eq_spec_frame} is given by  
\begin{equation}
    \widetilde{\gE}^{\rm tot}(\bH) =  \frac{1}{2} \sum_{(r,j) \in \gI} \trace \left(  (\gW_{r,j} \bH)^\top  \gW_{r,j} \bH  -   (\gW_{r,j} \bH)^\top {\rm diag}(\btheta_{r,j}) \gW_{r,j} \bH \bW  \right). \label{eq_tot_spec_energy}
\end{equation}

We next show, similar to spatial framelet convolution, the spectral convolution in \eqref{eq_spec_frame} can lead to both LFD and HFD dynamics. 

\begin{theorem}
\label{thm_sp_frame_lfdhfd}
The spectral graph framelet convolution \eqref{eq_spec_frame} with Haar-type filter can induce both LFD and HFD dynamics. Specifically, let $\btheta_{0,J} = \bones_n$ and $\btheta_{r,j} = \theta \bones_n$ for $r=1,...,L, J = 1,...,J$ where $\bones_n$ is a vector of all $1$s. Suppose $\theta \geq 0$. Then when $\theta \in [0,1)$, the spectral framelet convolution is LFD and when $\theta > 1$, the spectral framelet convolution is HFD.
\end{theorem}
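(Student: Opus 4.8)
The plan is to run the same argument as in the proof of Theorem~\ref{thm_frame_lfdhfd}, the point being that with the prescribed scalar filter coefficients the entire spectral propagation collapses to a single polynomial in $\widehat{\bL}$. I would first take $\bW = \bI_c$ (we only need to exhibit parameter settings realizing each regime, so this is harmless). Since $\btheta_{0,J} = \bones_n$ makes ${\rm diag}(\btheta_{0,J}) = \bI_n$ and $\btheta_{r,j} = \theta\,\bones_n$ makes ${\rm diag}(\btheta_{r,j}) = \theta\,\bI_n$, the update \eqref{eq_spec_frame} becomes $\bH(\ell+1) = \big( \gW_{0,J}^\top \gW_{0,J} + \theta \sum_{r,j} \gW_{r,j}^\top \gW_{r,j} \big) \bH(\ell)$. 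Writing $\gW_{r,j} = \bU^\top \bLambda_{r,j} \bU$ and using the tightness identity $\bLambda_{0,J}^2 + \sum_{r,j} \bLambda_{r,j}^2 = \bI_n$, this equals $\bU^\top \big( \theta \bI_n + (1-\theta) \bLambda_{0,J}^2 \big) \bU \, \bH(\ell)$. Iterating $m$ steps (with a stepsize $\tau$, exactly as in \eqref{eq_discrete_frame}) and vectorizing gives $\vec(\bH(m\tau)) = \tau^m \sum_{k,i} \mu(\lambda_i)^m c_{k,i}(0)\, \be_k \otimes \bu_i$, where $\mu(\lambda) := \theta + (1-\theta) g(\lambda)$ and $g(\lambda) := (\lambda^{\Lambda_{0,J}})^2$, the analogue of the spectral factor in \eqref{eq_vec_prop}.

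The next step is to pin down the monotonicity of $g$ on the relevant spectral interval. For the Haar-type filter, $g(\lambda) = \cos^2(\lambda/8)$ when $J = 1$ and $g(\lambda) = \cos^4(\lambda/8)\cos^2(\lambda/16)$ when $J = 2$, using the scaling functions recalled in Section~\ref{sect_dirichlet_energy_framelet}. On $[0,\rho_L] \subseteq [0,2]$ every cosine argument is at most $1/4 < \pi/2$, so $g$ is continuous, strictly positive, and strictly decreasing, with $g(0) = 1$ and $g(\rho_L) \in (0,1)$. Hence $\mu$, being an affine function of $g$, is strictly monotone in $\lambda$: for $\theta \in [0,1)$ it is strictly decreasing with $\mu(0) = 1$ and $0 < \mu(\lambda) < 1$ for $\lambda > 0$; for $\theta > 1$ it is strictly increasing with $\mu(0) = 1$ and $\mu(\rho_L) = \theta - (\theta-1) g(\rho_L) > 1$. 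In particular $|\mu|$ attains its unique maximum at frequency $0$ in the first case and at frequency $\rho_L$ in the second. (The borderline $\theta = 1$, where $\mu \equiv 1$ and the dynamics is stationary, is neither LFD nor HFD, consistent with the hypotheses excluding it.)

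The conclusion then follows verbatim from the end of the proof of Theorem~\ref{thm_frame_lfdhfd}. Let $\bP_0$ (resp. $\bP_\rho$) denote the orthogonal projector onto the $0$-eigenspace (resp. $\rho_L$-eigenspace) of $\bI_c \otimes \widehat{\bL}$. Split the iterated sum into its dominant-frequency block and the remainder, factor out $(\tau\,\mu_{\max})^m$ where $\mu_{\max} = \max_\lambda \mu(\lambda)$, and note that $\mu(\lambda_i)/\mu_{\max} < 1$ for every non-dominant $\lambda_i$, so the remainder tends to $0$ as $m \to \infty$. Therefore, for almost every initialization (equivalently, whenever $\bP_0 \vec(\bH(0)) \neq 0$, resp. $\bP_\rho \vec(\bH(0)) \neq 0$), $\vec(\bH(m\tau)) / \| \vec(\bH(m\tau)) \|$ converges to a unit vector $\bh_\infty$ with $(\bI_c \otimes \widehat{\bL}) \bh_\infty = 0$ when $\theta \in [0,1)$ and $(\bI_c \otimes \widehat{\bL}) \bh_\infty = \rho_L \bh_\infty$ when $\theta > 1$; Definition~\ref{def_hfd_lfd} and Lemma~\ref{lemma_hfd_lfd} then yield LFD with $E(\bH(t)/\|\bH(t)\|) \to 0$ in the first case and HFD with $E(\bH(t)/\|\bH(t)\|) \to \rho_L/2$ in the second.

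I expect the only genuinely delicate point to be the spectral-factor analysis, i.e.\ verifying that $g$ is strictly decreasing and strictly positive throughout $[0,\rho_L]$; this is what makes $\mu$ have a \emph{unique} extremizer at the correct endpoint and is the step where the choice of Haar filter (together with $\rho_L \le 2$) is actually used. Once monotonicity is secured, the dichotomy in $\theta$ and the normalization/limit argument are routine, and carry over to larger scales $J$ or other admissible filters provided the corresponding $g$ retains strict monotonicity on $[0,\rho_L]$.
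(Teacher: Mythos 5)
Your proposal is correct and follows essentially the same route as the paper's proof: vectorize the linear spectral update, reduce the propagation factor to $\theta + (1-\theta)\cos^2(\lambda_i/8)$ (the paper writes the equivalent form $\cos^2(\lambda_i/8)+\theta\sin^2(\lambda_i/8)$), identify the dominant frequency via monotonicity of this factor in $\lambda_i$ on $[0,\rho_L]$, and conclude with the same normalization/limit argument as in Theorem~\ref{thm_frame_lfdhfd}. The only cosmetic difference is that you fix $\bW=\bI_c$, whereas the paper keeps a general symmetric $\bW$ and absorbs it via $\Delta^W=\max_k|\lambda_k^W|$, which does not change the frequency analysis.
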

\begin{proof}
We here focus on the Haar filter with $J = 1$ scale. For $J = 2$, the proof is exactly the same.
We can write the discretized gradient flow of \eqref{eq_tot_spec_energy} with a stepsize $\tau$ as
\begin{align*}
    \vec \big( \bH(m\tau) \big) &= \tau^m \Big( \bW \otimes (\gW_{0,1}^\top \gW_{0,1} + \theta \gW_{1,1}^\top \gW_{1,1}) \Big)^m \vec \big( \bH(0) \big) \\
    &= \tau^m \sum_{k,i} \Big(\lambda_k^W  \big(  \cos^2(\lambda_i/8) + \theta \sin^2(\lambda_i/8) \big) \Big)^m c_{k,i}(0) \bphi_k^W \otimes \bu_i.
\end{align*}
where we let $\{ (\lambda_k^W, \bphi_k^W) \}_{k = 1}^c$ as the eigen-pairs of $\bW$ and denote $c_{k,i}(0) = \langle  \vec(\bH(0)), \bphi_k^W \otimes \bu_i \rangle$. Now we see
\begin{align*}
    |\lambda_k^W (\cos^2(\lambda_i/8) + \theta \sin^2(\lambda_i/8) )| \leq \Delta^W (\cos^2(\lambda_i/8) + \theta \sin^2(\lambda_i/8) ).
\end{align*}
where we let $\Delta^W := \max_k |\lambda_k^W|$. It is easy to see when $\theta > 1$, $\cos^2(\lambda_i/8) + \theta \sin^2(\lambda_i/8)$ is (monotonically) 
increasing in $\lambda_i \in [0, \rho_L]$ and its maximum is achieved at $\lambda_i = \rho_L$. On the other hand, when $\theta \in [0,1)$, the function is (monotonically) decreasing and the maximum is achieved at $\lambda_i = 0$. The proof is complete by following the same steps as for Theorem \ref{thm_frame_lfdhfd}. 
\end{proof}

\begin{remark}
From Theorem \ref{thm_sp_frame_lfdhfd}, we can draw a similar conclusion as for the spatial framelet convolution. That is, when the weights on the high-pass framelets are higher than that of the low-pass, the spectral framelet convolution is HFD. Otherwise, the model is LFD. This shows an equivalence between the spectral and spatial framelet convolution from the perspective of gradient flow. 
\end{remark}

\section{Concluding remarks}
In this paper, we have theoretically analyzed the behaviors of framelet-based GNNs via the framework of gradient flows and energy minimization. We have shown the framelet convolutions, either spatial or spectral, can accommodate both low-frequency-dominated (LFD) and high-frequency-dominated (HFD) dynamics, unlike many existing works that can only be LFD. This corroborates the good empirical performance of framelet-based models on heterophilic graphs and on mitigating the over-smoothing issue. The main underlying reason is that the framelet decomposition allows separate weight matrices on different frequencies. More specifically, when the weights on high-frequency components are relatively higher than the low-pass components, the dynamics can be HFD. In addition, we also propose a framelet generalized energy where its gradient flow includes many existing models as special cases. We have explained from various aspects why the proposed model generally leads to more flexible dynamics. 

It should be noted that for simplicity of analysis, we have only focused on the Haar-based filter and when the weight matrices share the same eigen-space. This suffices for the scope of this work. It would be interesting to explore necessary conditions (if available) for framelet-based models to be LFD/HFD. We hope this work would motivate more theoretical understanding on the capability of multi-scale spectral GNNs via alternative frameworks.

\bibliographystyle{plain}

\end{document}